\newtheorem{theorem}{Theorem} 
\newtheorem{lemma}{Lemma}
\newcommand{\RNum}[1]{\uppercase\expandafter{\romannumeral #1\relax}}
\theoremstyle{definition}
\newtheorem{definition}{Definition}[]
\begin{document}

\title{Continuous Sculpting: Persistent Swarm\\ Shape Formation Adaptable to \\Local Environmental Changes
}

\author{Andrew G. Curtis$^{1}$, Mark Yim$^{2}$, Michael Rubenstein$^{1}$ 
\thanks{Manuscript received: January 5, 2024.}
\thanks{This paper was recommended for publication by Editor TBD.}
\thanks{This work was supported by the NDSEG Fellowship and The National Science Foundation, NRI2.0 grants  2024692 and 2024615.}
\thanks{$^{1}$Andrew G. Curtis and Michael Rubenstein are with the Center for Robotics and Biosystems, McCormick School of Engineering, Northwestern University, Evanston, IL USA {\tt\footnotesize agc@u.northwestern.edu, rubenstein@northwestern.edu}.} 
\thanks{$^{2}$Mark Yim is with the General Robotics, Automation, Sensing and Perception (GRASP) Lab, University of Pennsylvania, Philadelphia, PA USA {\tt\footnotesize yim@seas.upenn.edu}.}
}%

\maketitle

\begin{abstract}
Despite their growing popularity, swarms of robots remain limited by the operating time of each individual. We present algorithms which allow a human to sculpt a swarm of robots into a shape that persists in space perpetually, independent of onboard energy constraints such as batteries. Robots generate a path through a shape such that robots cycle in and out of the shape. Robots inside the shape react to human initiated changes and adapt the path through the shape accordingly. Robots outside the shape recharge and return to the shape so that the shape can persist indefinitely. The presented algorithms communicate shape changes throughout the swarm using message passing and robot motion. These algorithms enable the swarm to persist through any arbitrary changes to the shape. We describe these algorithms in detail and present their performance in simulation and on a swarm of mobile robots. The result is a swarm behavior more suitable for extended duration, dynamic shape-based tasks in applications such as agriculture and emergency response.
\end{abstract}

\begin{IEEEkeywords}
Swarms; Path Planning for Multiple Mobile Robots or Agents; Distributed Robot Systems; Shape Formation
\end{IEEEkeywords}



\section{Introduction}
As their popularity continues to grow, commercial drones are employed in swarms more and more frequently. 
Swarms of flying robots are commonly used in shape-based applications, such as drone shows, where robots navigate through a set of predefined waypoints to form different shapes.
Other shape-based flying swarm applications include search and rescue, emergency response communications networks, and crop monitoring, where a flying swarm maintains a 2D formation over an area of land to perform its task.

The challenge with most shape-based flying swarm applications is the  limited flight time, or endurance, of the robots. 
Typically, robots must land to recharge,
and the task is paused until the robots can fly again (e.g., crops go unmonitored while the robots charge). 
In addition, most swarms only form static predefined shapes, 
limiting their potential applications.

\IEEEpubidadjcol 

\begin{figure}[!t]
\centering
\includegraphics[width=88mm]{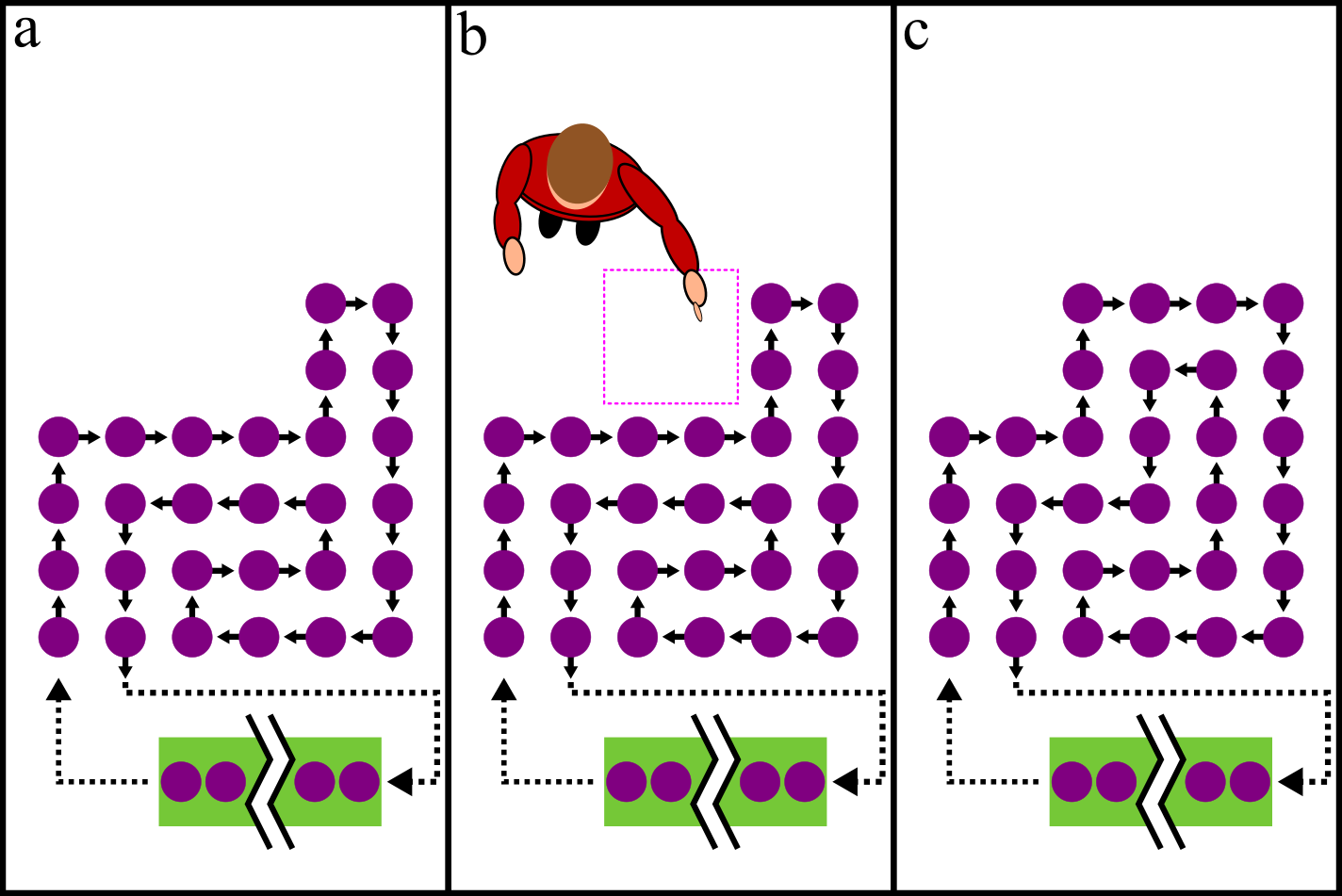}
\caption{Cartoon overhead view of adaptive and persistent shape formation. (a) A swarm of mobile robots (purple circles) leave a charging station (yellow box) to enter and form a shape. The robots move through the shape in the directions indicated by the black arrows until they exit the shape and return to the charging station. (b) A human points to where they would like to add to the shape (purple box). (c) The swarm adjusts to form a path through the new shape while continuing to cycle to and from the charging station.}
\label{ov1}
\end{figure}

To overcome these challenges, we present a novel approach to persistent and adaptable shape formation.
We shift the paradigm of shape formation from shapes formed by static robots to shapes formed by a sequence of robots moving along a path. 
The path allows robots to cycle to and from a charging station, so robot endurance is no longer a constraint on shape duration.
The path is also adaptable to external environmental stimuli (e.g., human interactions) so that the shape is free to change over the duration of the task.
For example, a farmer could both persistently monitor crops (regardless of individual robot charge) and directly interact with the swarm to change the fields of crops being monitored.

We achieve shape persistence by allowing swarm robots to cycle between a shape and a charging station indefinitely via a path that both approximates the shape and facilitates the movement of robots through the shape and back to a charging station (Fig.~\ref{ov1}).
With each change to the shape, the swarm adapts to another path and the process continues. 
The algorithms we present are decentralized and scalable to large swarms.
They also open the door to a new application of human-swarm interaction - continuous sculpting - where a human can actively morph a swarm into a persistent shape. 
One potential issue with continuous sculpting is making sure potential collisions with humans are safe, but this can be solved by making robots small and light~\cite{curtis2023autonomous}.

The remainder of this paper is structured as follows.
After discussing related work (Section~\ref{sec:related work}), we describe the basic robot capabilities required for a swarm to execute the algorithms (Section~\ref{sec:robot capabilities}).
Then, we introduce the algorithm responsible for shape persistence, called the \textbf{default behavior}. 
The default behavior is introduced and demonstrated in Section~\ref{sec:shape persistence}, and the theory supporting the algorithm is described in Section~\ref{sec:shape_persistence_theory}.
The later portions of the paper are dedicated to the algorithms responsible for shape adaptability which are only executed to resolve a shape change.
The adaptability algorithms are introduced in Section~\ref{sec:shape_adaptability}.
In Section~\ref{sec:additional_experiments}, we provide demonstrations of both adaptability and persistence in the presence of humans and for a large swarm.
Finally, in Section~\ref{sec:adaptability_theory} we describe the theory supporting the adaptability algorithms, and in Section~\ref{sec:conclusion} we summarize our work and describe our future plans.
Terms defined in the text are indicated in \textbf{bold} and their definitions are included in Appendix~\ref{app:terms}.

\section{Related Work}\label{sec:related work}
This work lies in a largely unstudied area of swarm robotics with no known direct state-of-the-art comparisons.
However, related works exist in the fields of shape formation, area coverage, Hamiltonian path planning, and human-swarm interaction. 

\subsection{Shape Formation}
One approach to persist in a shape is to have individual swarm robots cycle into and out of the shape to recharge.
Here, the energy limitations of individual robots no longer constrain shape duration but instead constrain the size of the path that an individual robot can travel and thus the size of the final shape.
One problem occurs if a robot is ever ``stuck" along its path in the middle of the shape formation, unable to return to a charging station because it is surrounded by other robots forming the shape.
Stuck robots are a potential problem for most static shape formation algorithms that begin with robots in some arbitrary starting position and end with robots statically holding a position in the defined shape \cite{wang2020shape,xu2010multi-robotpattern, xue2009formationcontrol,wu2021distributed,liu2018distributedcontrol,aranda2016distributed}.

However, there are some non-static shape formation algorithms that use dynamic shape scaling \cite{wang2021generating}, time varying shape formations \cite{dong2015time-varying}, and robot position swapping \cite{xu2019concurrent} so that robots are not always statically holding a position in the defined shape. 
These could potentially be altered to avoid stuck robots, but they either lack guarantees that a particular robot will never become stuck or they operate on predefined shapes and therefore lack adaptability to real-time changes.

\subsection{Area Coverage}
Traditional area coverage algorithms are similar to adaptive and persistent shape formation because they can cycle robots through a defined shape or area, though typically at a lower density of robots than most shape formation algorithms.
Dozens of area coverage algorithms already exist for aerial vehicles \cite{cabreira2019survey}. 
Many of these algorithms use traditional back-and-forth path planning with a centralized controller \cite{jiao2010research, huang2001optimal,torres2016coverage, difranco2016coverage}.
Others use a centralized controller to create spiral-like paths that end in the middle of the area, making it challenging for a robot to return to a charging station for persistent coverage without becoming stuck~\cite{cabreira2018energy-aware}. 
Still others implement decentralized approaches (which are more suitable for swarms) using built-in randomness \cite{albani2017field}, cost functions \cite{lim2010waypoint}, and virtual pheromones \cite{zelenka2014insect-losscomms}.
However, the decentralized approaches typically result in nonuniform area coverage which, if used for shape formation, would result in distorted shapes with no guarantees for avoiding stuck robots. 

\subsection{Hamiltonian Path Planning}
In this work, we ensure that a robot can cycle through a 2D shape and back to a charging station by using planar (i.e., non-intersecting) Hamiltonian cycles.
These guarantee the robot's path covers the entire shape uniformly.
If the cycle is further constrained to have adjacent start and end positions on the periphery of the shape, then the robot is guaranteed to be able to cycle into and out of the shape without becoming stuck or colliding with its peers.
Unfortunately, finding a Hamiltonian cycle through a shape represented as a grid graph of points in space is a NP-complete problem \cite{Garey1976planar,umans1997hamiltonian, nishat2020reconfiguration}, making it difficult to scale to large shapes. 

Despite the challenge, others have found Hamiltonian cycles by either restricting the problem to grid graphs that are solid (i.e., no holes)~\cite{umans1997hamiltonian} or for specialized purposes such as programmable chain assembly~\cite{griffith2011programmable}, obstacle avoidance~\cite{joshi2019modification}, and nonuniform aerial coverage \cite{sadat2015fractal}.
However, in all of these approaches, Hamiltonian cycles are found via a centralized planner using global information about the entire grid graph or a single robot moving through (and sensing) the entire environment. 
In a swarm setting, it is important that Hamiltonian cycles are generated in an online, decentralized manner with each individual robot planning its next step using only local information.
This helps the swarm stay adaptable and permits the use of simple robots.

\subsection{Human-Swarm Interaction}
Kolling et al. \cite{kolling2015human} define humans directly interacting with a swarm in a shared environment as proximal interaction.
Examples of proximal interaction include works where swarms are shown to detect and identify specific human gestures \cite{giusti2012human, nagi2014human}.
In our work, we use human gestures in a shared environment to initiate shape changes.
We focus on the swarm's response to the gesture, not necessarily the detection and identification of the gesture itself.

\section{Robot Capabilities and Assumptions}\label{sec:robot capabilities}
The algorithms presented in this paper are agnostic to specific swarm implementations, so the algorithms can be employed on robots with various hardware and software designs. 
However, there are some basic capabilities we assume each robot has in order to execute the algorithms.

First, we assume that each robot can store a representation of a grid graph overlay on the environment ($G = (n,e)$).
We also assume that each robot can identify if it is residing at a particular node ($n$) or translating along a particular edge ($e$) of $G$.
Further, we assume that each robot can identify a subset of grid graph nodes that represent the shape ($S$) that the swarm is forming in space ($S \subset G$).
For the simulations and physical demonstrations in this paper, we elected (by design choice) to have robots maintain a list of in-shape grid nodes.
If the shape is ever changed, robots communicate the change using neighbor-to-neighbor communication to propagate the change through the swarm so each robot can update its list.

For this to occur, we also assume that robots are capable of robot-to-robot communication and are equipped with internal clocks.
We also assume that each robot can communicate with its peers within a range of $\sqrt{2}*l$ where $l$ is the length of an edge in $G$. 
This allows robots to communicate with up to eight immediate neighbors (i.e., N, NE, E, SE, S, SW, W, and NW). 
During operation of the algorithms, robots use neighbor-to-neighbor communication to inform one another of shape changes, share position information, and synchronize their clocks so that robots can move together in lockstep.

Additionally, since the algorithms in this paper use planar Hamiltonian cycles (and since finding a Hamiltonian cycle in a grid graph is an NP-complete problem), it is not realistic to assume that a given robot can find a Hamiltonian cycle as it navigates through a very large shape.
Therefore, we enforce a set of rules to standardize shape structures so that the swarm can more easily find Hamiltonian cycles, including for very large shapes and shapes with holes.

The rules define a unit shape, called a \textbf{box}, that is comprised of four grid nodes in a 2x2 square (Fig.~\ref{fig:terms}).
This is similar to the method of finding Hamiltonian paths by breaking a box down into four smaller boxes, as discussed for space filling curves in \cite{griffith2011programmable}.
We assume that each robot associates a given 2x2 square of grid nodes with the same box as every other robot.
This ensures that a given shape representation is consistent for all robots in the swarm.
Further, we assume each robot can determine the box that it is in and its relative position with respect to the center of that box. 
This allows robots to detect if they are moving clockwise or counter-clockwise around the center of a box.

Allowable shapes are continuous so that each box meets the full side of another box: partial side connections and vertex-to-vertex only connections are not allowed.
Partial boxes are also not permitted.
Shapes that meet these conditions are called \textbf{valid shapes}, and they are important because they reduce the difficulty of finding a planar Hamiltonian cycle from NP-complete to something that is solvable in linear time (run time, $O(\mathcal{B})$, scales linearly with $\mathcal{B}$, where $\mathcal{B}$ is the number of boxes in the shape). 
In fact, we will show that a planar Hamiltonian cycle can always be found for any valid shape in Section~\ref{sec:valid_shapes}.
Finally, although we only consider valid shapes created in a square lattice graph, it is plausible that the algorithms in this paper could extend to other planar graphs, such as hexagonal lattices, but this is reserved for future work.

\begin{figure}[t]
\centering
\includegraphics[width=80mm]{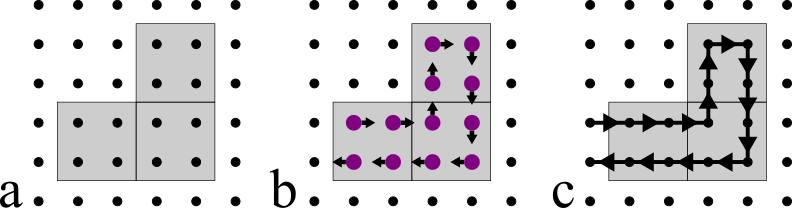}
\caption{(a) An example of a valid shape of 3 boxes. Points indicate nodes of $G$. (b) Robots (purple) approximating shape. Arrows indicate robot heading. (c) The path along which robots travel. Arrows indicate edge direction.
}
\label{fig:terms}
\end{figure}

\section{Shape Persistence}\label{sec:shape persistence}
For a swarm to form a persistent shape beyond the energy limitations of any individual robot, the swarm must cycle robots back and forth between the shape and a charging station. 
This cycle can be separated into four subroutines: 1) forming the shape, 2) traveling from the exit point of the shape to a charging station, 3) charging at the charging station, and 4) traveling from the charging station to the entry point of the shape. 
The primary contributions of this work are in the first subroutine: forming the shape. 
To do this, each robot will travel along the same planar Hamiltonian cycle to both approximate the shape and move through the shape without colliding with other robots in the swarm.

To be consistent with prior works, we have adapted the formal definition of Hamiltonian circuits from the 1976 paper by Garey et. al. \cite{Garey1976planar} which first identified the planar Hamiltonian circuit problem as NP complete (Definition~\ref{def:planar_hamiltonian_cycle}).
We will use this definition throughout the remainder of this paper.

\begin{definition}[Planar Hamiltonian Cycle]\label{def:planar_hamiltonian_cycle}
A planar Hamiltonian cycle in a graph is a path which passes through every vertex exactly once and returns to its starting point without intersecting itself.
\end{definition}

Further, in our work, planar Hamiltonian cycles are broken at the adjacent entry and exit points (there are no path edges connecting the two). 
Thus, the paths that robots create through shapes are technically planar Hamiltonian paths (not cycles). 
However, for the remainder of this paper, a planar Hamiltonian path with adjacent start and end nodes will be treated identically to a planar Hamiltonian cycle since the former can always be turned into the latter by simply connecting the adjacent start and end nodes. 
Both will be referred to as Hamiltonian cycles to distinguish from other Hamiltonian paths that do not have adjacent start and end nodes.
Finally, we restrict the start and end nodes to the periphery of the shape so that each robot can enter and exit the shape and ultimately cycle to and from a charging station.

\subsection{Subroutine 1: Forming the Shape}

In subroutine 1, robots form a planar Hamiltonian cycle to both approximate the shape and cycle robots from the entry node (a grid node on the periphery of the shape where robots enter the shape) to the exit node (a grid node on the periphery of the shape where robots exit the shape to return to the charging station). 
To do this, each robot enters the shape at the entry node and then visits each grid node in shape exactly once before finally exiting the shape at the exit node without ever intersecting its own path.
Robots keep track of the boxes they have visited $V_B$ and the nodes they have visited $V_n$ while traversing through the shape.
Robots follow a set of rules called the \textbf{default behavior} when traveling in the shape to determine the order in which grid nodes are visited between the entry and exit nodes using only local information.

The default behavior prioritizes a robot's movement through a shape so that a robot visits new boxes in a manner that mimics a traditional depth-first search. 
If a robot were performing a depth-first search of the boxes in a shape, the robot would first identify all of the children of its current box.
Next, the robot would enter a child box and continue to move parent-to-child as it explored all of the descendant boxes within that subtree.
Only after the robot exhausted all descendants in the subtree would the robot move ``up" the tree, revisiting boxes in the reverse order (child-to-parent) and repeating the process of visiting descendant boxes before visiting other children.
Once all boxes had been visited, the robot would exit the shape.

Similar logic governs the default behavior.
As a robot traverses through the shape, it moves from node-to-node such that it visits boxes in a depth-first, clockwise priority manner starting at the entry node and finishing at the exit node.
Specifically, when a robot is at a node in the shape, there is a set of four possible edges that the robot could take to move away from that node (i.e., $e_N$, $e_S$, $e_E$, $e_W$). 
If the robot is at the exit node, it will take the edge that leads the robot out of the shape to the charging station.
For all other nodes ($n$) in the shape ($S$), the robot selects which edge to take next by first ignoring any edge that leads the robot to a node that is out of the shape or to a node that the robot has already visited. 
Then, the following rules are applied to the remaining edges in the set (as depicted in Algorithm~\ref{alg:default_behavior}):
\begin{quote}

\textbf{IF} an edge leads to a node in a box not previously visited, take that edge. (Rule 1)

\textbf{ELSE IF} an edge leads the robot clockwise within its current box, take that edge. (Rule 2) 

\textbf{ELSE IF} an edge leads to a node in the parent box in the tree, take that edge. (Rule 3)
    
\end{quote}

\begin{algorithm}[ht]
\caption{Default Behavior}
\label{alg:default_behavior}
    \begin{algorithmic}[1]
        \Require $b, S, V_B, V_n$ \Comment{$b$ = current box}
        \Ensure $\epsilon'$ \Comment{$\epsilon'$ = edge to next node}
        \If{robot at exit node}
            \State exit shape
        \Else
            \Statex \qquad$\triangleright$ \textbf{Create lists of plausible next edges and nodes}
            \State $\mathcal{E} \gets [e_N, e_S, e_E, e_W]$
            \State $\mathcal{N} \gets [n_N, n_S, n_E, n_W]$
            \Statex \qquad$\triangleright$ \textbf{Remove visited and out-of-shape options}
            \For{$\epsilon, n$ in $\mathcal{E}, \mathcal{N}$}
                \If{$n \in V_n$}
                    \State remove $\epsilon, n$ from $\mathcal{E}, \mathcal{N}$
                \ElsIf{$n \notin S$}
                    \State remove $\epsilon, n$ from $\mathcal{E}, \mathcal{N}$
                \EndIf 
            \EndFor 
            \Statex \qquad$\triangleright$ \textbf{Employ default behavior rules}
            \If{$\exists$ $\epsilon,n \in \mathcal{E},\mathcal{N}$ $|$ Box($n$) $\notin V_B$} \Comment{Rule 1}
                \State $\epsilon' \gets \epsilon$ \Comment{Rule 1}
            \ElsIf{$\exists$ $\epsilon,n \in \mathcal{E},\mathcal{N}$ $|$ $n$ in $b$\Comment{Rule 2}\\ \qquad\qquad\qquad\quad \textbf{and} $\epsilon$ is clockwise} \Comment{Rule 2}
                \State $\epsilon' \gets \epsilon$ \Comment{Rule 2}
            \ElsIf{$\exists$ $\epsilon,n \in \mathcal{E},\mathcal{N}$ $|$ $n$ in parent of $b$} \Comment{Rule 3}
                \State $\epsilon' \gets \epsilon$ \Comment{Rule 3}
            \EndIf
        \EndIf 
    \end{algorithmic}
\end{algorithm}



While executing the default behavior, robots communicate timing information to synchronize their clocks.
This allows robots to move in lockstep with one another, avoiding in-path collisions and maintaining spacing such that one robot is always occupying one grid node in $S$. 
Robots also avoid collisions by never crossing another robot's path.
This is the result of every robot following the same default behavior rules (and Hamiltonian cycle).
Since the default behavior creates a Hamiltonian cycle such that each robot terminates its path at the exit node (which is on the periphery of the shape), and every robot can traverse the shape without collision, we can guarantee that all robots will be able to exit the shape to recharge (i.e., no “stuck” robots), which facilitates the persistence of the shape.
The fact that the default behavior results in a planar Hamiltonian cycle is proven in Section~\ref{sec:default_makes_preferred}.

One key aspect of the default behavior is that robots do not plan their paths through the shape prior to entering it.
Instead, each robot makes its own decisions to reactively build a path through the shape from entry to exit one step at a time.
That means that our design choice to have robots maintain a list of in-shape grid nodes is not necessary for algorithm operation.
If a robot were capable of distinguishing between in-shape and out-of-shape nodes with its onboard sensors, then the list of in-shape grid nodes ($S$) could be replaced with a list that the robot senses and creates as it moves.
This more sophisticated sensing scheme would replace the shape memory requirements in the default behavior without changing the rules for robot motion.
This would further expand the practical applications of persistent shape formation to very large shapes (with many grid nodes) or for very simple robots (with limited memory capacity).
Investigating this is reserved for future work.

\subsection{Subroutines 2, 3, \& 4}
Although the remaining subroutines are not the primary focus of this work, they are necessary for the swarm's success, so very simplistic behaviors are implemented.
The two ``traveling" subroutines (traveling to the charging station and traveling to the shape) are simplified by a set of predefined, static waypoints from the exit node to the charging station and from the charging station to the entry node, respectively. 
This ensures that robots can travel between the charging station and the shape without encountering obstacles or collisions. 

The traveling subroutines are further simplified such that robots arrive at the shape and exit the shape at the same user defined constant time interval as robots moving in lockstep between nodes within the shape (e.g., $\tau = 10$s). 
Further, if the robots in the shape are ever holding still to complete some portion of an algorithm, then the last robot to have left the shape will carry a message to the other robots outside of the shape.
This message contains a list of nodes added to or removed from the shape so that the robots outside of the shape can update their list of in-shape nodes.
Robots in receipt of this message also adjust their shape-arrival times to limit the number of robots queuing at the shape entry node.

Similarly, the entry and exit nodes are simplified to be static and known $a$ $priori$.
By definition, the entry and exit nodes are adjacent to one another on the periphery of the shape and within the same box.
Having static and predefined entry and exit nodes allows us to demonstrate the shape persistence aspects of the algorithms without unnecessarily complicating swarm behavior.
Changing the entry and exit nodes is reserved for future work.
 
The charging subroutine is also simplified since most implementations would be highly dependent on the type of charging technology used (e.g., wireless charging vs. contact charging).
For this work, the charging station is modeled as a set of static positions in space. 
Upon reaching a position in the charging station, a robot begins charging. 
When a robot is done charging, it leaves the charging station to travel back to the shape in a first-in-first-out order and such that it will arrive at the shape at the constant time interval ($\tau$).

In addition to these simplifications, we assume that robots move through the entire shape and back to the charging station on one charge and that the swarm has enough robots that each robot has enough time to completely recharge before it cycles back into the shape. 
If for a given implementation a shape is too large that robots could not pass through the shape and back to the charging station on a single charge, a system of multiple charging stations could be implemented, effectively breaking the shape into multiple pieces, but that is reserved for future work.

\subsection{Demonstrations of Persistence}
Shape persistence was demonstrated in identical tests for both a simulated swarm and a swarm of physical robots.
In both kinds of experiments, a user initialized a shape of four boxes (with a start and end node) in a grid graph with edge lengths of $l = 0.2$m.
16 robots were set up in a temporary queue outside of the shape, and 22 robots were placed in a charging station.
Coordinates of the charging station, as well as the waypoints to and from the charging station, were pre-programmed and provided to all robots.
These coordinates did not change during the course of the experiment.
Communication was artificially limited to a range of 0.3m for each robot.

After initialization, the robots in the temporary queue began to fill the shape via the default behavior. 
Each robot stepped in synchronization with other robots (due to clock synchronization messages), moving to a new node every $\tau = 12$s. 
Once the shape was almost completely formed, robots in the charging queue began to leave so that they would arrive at the shape in $\tau = 12$s increments starting as soon as the first robot was about to exit the shape.
For every 12s after that, one robot left for the charging station, one robot arrived from the charging station, and each robot within the shape moved to its next node.
At that point, the system was in steady state and allowed to persist for some time.

In both the simulation and physical experiments, the charging station was modeled as a set of linear positions along one side of the environment. 
This was meant to emulate a ``charging rail" where robots could pull up to charge themselves. 
This design decision did create the potential for collisions between robots entering a charging station position and robots exiting a charging station position. 
Robots communicated their own position to one another to help avoid collisions.
In the event of a potential collision, robots heading to the charging station yielded to robots heading toward the shape to ensure the shape-bound robots arrived at the shape on time.
Furthermore, robots entered and exited the charging station in a first-in-first-out basis.

For the physical robot demonstration, we used an existing swarm of ground robots called \emph{Coachbot V2.0}~\cite{wang2020shape}. 
Each \emph{Coachbot} is equipped with a two-wheeled differential drive system and the ability to sense its position $(x,y,\theta)$ in the arena. 
The robots are also equipped with an onboard battery, an onboard Raspberry Pi computer, and onboard Wi-Fi for robot-to-robot communication that can be artificially limited to a user defined distance.
The simulator was built to emulate the performance of the \emph{Coachbots} (i.e., differential drive robots with the ability to sense $x$, $y$, and $\theta$, robot-to-robot communication, etc.).

\begin{figure}[!t]
\centering
\includegraphics[width=80mm]{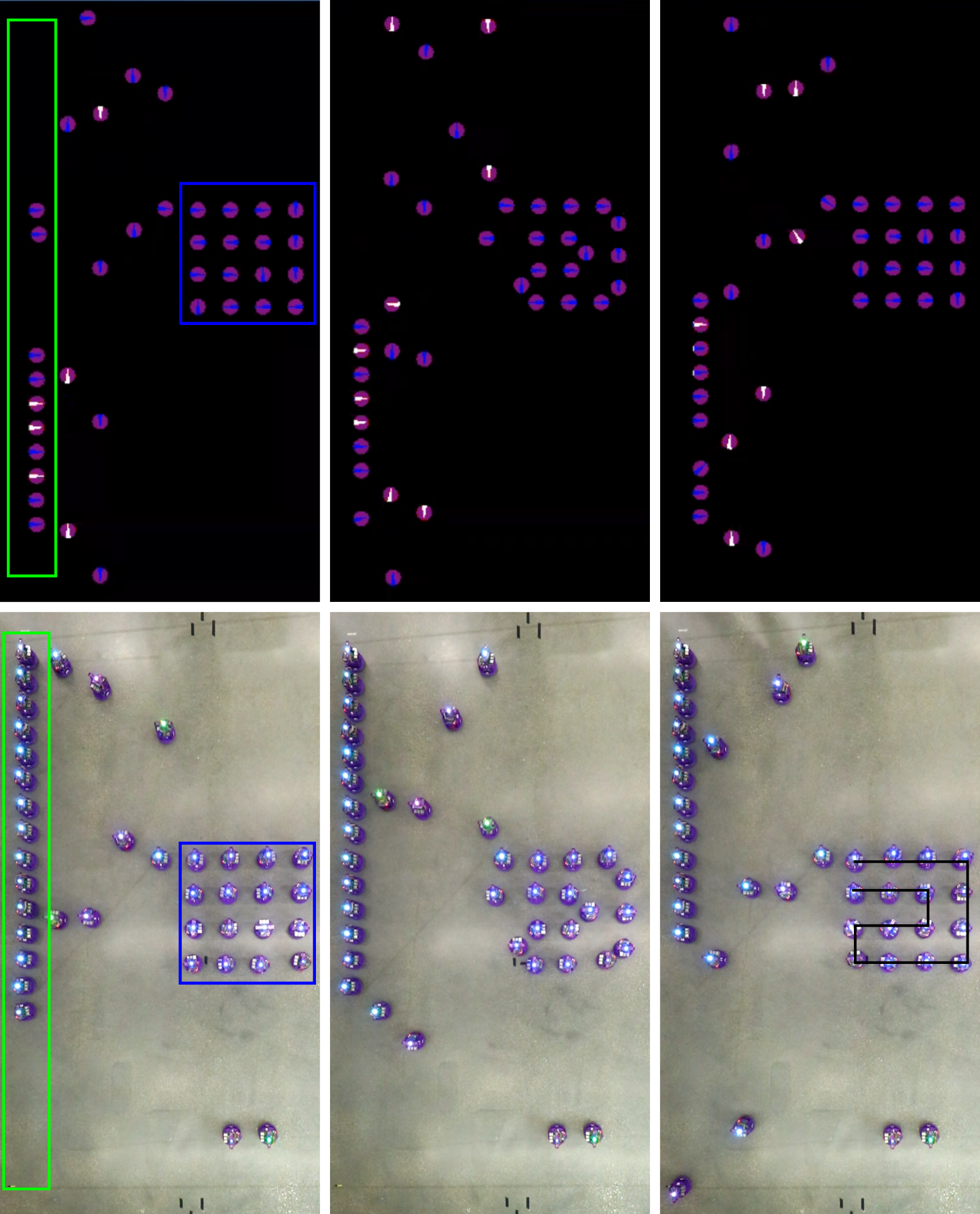}
\caption{Images from a simulation (top) and physical robot experiment (bottom). 
Each set of three images represents a sequence in time of the robots forming the shape (left), stepping to their next node (center), and holding at their next node (right).
The charging station is highlighted by the green box, and the desired shape is highlighted by the blue box.
Robots in neither box are cycling to or from the shape.
The resulting path through the shape for both the simulation and the physical robots is shown in the bottom right image.
The LED color of each robot is insignificant for these experiments.
}
\label{persistance}
\end{figure}

Images from both the simulation and physical robot experiments can be found in Fig.~\ref{persistance}.
Both the simulated swarm and the physical swarm were left to maintain their respective shape for over 30 minutes (and some simulations were left to run for well over an hour).
In that time, each robot cycled through the shape and back to the charging station 4 or 5 times (9 or 10 times for the hour-long simulations).
The experiments were stopped by an experimenter, not due to robot or algorithmic failure. 
This suggests that each swarm is capable of persistent shape formation for extended duration tasks because both the simulated swarm and the physical swarm demonstrated the ability to cycle robots through the shape via the default behavior without failure or collision for multiple cycles through the entire swarm of robots.

The \emph{Coachbots} were used for the physical demonstrations because they are an established swarm platform that was available to the authors.
However, as ground robots, they could simply drive to form a shape, power down, and persist in that shape indefinitely without needing to recharge.
In the future, we plan to demonstrate swarm persistence with a swarm of flying robots (currently under construction) that do not have the luxury of a floor to help them hold position in space.
In the meantime, the \emph{Coachbots} served as an experiment-ready option that enabled us to demonstrate the algorithms on a physical platform.

\section{Shape Persistence Theory}\label{sec:shape_persistence_theory}
The theory behind the default behavior is rooted in some provable properties of valid shapes and planar Hamiltonian cycles.
The remainder of this section formally addresses this theory in three parts: establishing basic concepts, proving planar Hamiltonian cycles can be found for any valid shape, and proving the default behavior always produces a planar Hamiltonian cycle.

\subsection{Basic Concepts}\label{sec:basic_concepts}
In order to simplify the discussion of shape persistence theory, we define some basic terminology and concepts. 
While most of these concepts are used in the shape persistence proofs, some of them are also applicable to the shape adaptability concepts discussed in later sections.

The first concept is the \textbf{unit shape}: a shape of $b=1$ box.
A robot can take one of two possible cycles through a unit shape:  clockwise or counter-clockwise (Fig.~\ref{fig:unit_path}).
By design choice, we elect to use the clockwise cycle as default in this paper.
We define a clockwise planar Hamiltonian cycle through a unit shape as the \textbf{unit path}.
Choosing the counter-clockwise cycle as the unit path would have also been valid.
In that case, the algorithms would result in robots moving along paths (and edges) in the opposite directions as the ones shown in this paper.
Furthermore, Algorithm~\ref{alg:default_behavior} rule 2 would have to be changed to prioritize counter-clockwise motion instead of clockwise.
The remainder of the concepts and theories presented in this paper would go unchanged.

\begin{figure}[!t]
\centering
\includegraphics[width=40mm]{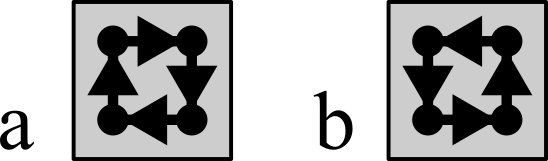}
\caption{(a) A clockwise planar Hamiltonian cycle through a unit shape. (b) A counter-clockwise planar Hamiltonian cycle through a unit shape. Arrows indicate edge direction.
}
\label{fig:unit_path}
\end{figure}

In discussing the remaining concepts, we will use the following terminology: \textbf{parallel} to refer to path edges that are equidistant everywhere and do not intersect; \textbf{opposite} to refer to edges that have directions exactly 180\textdegree{} from one another;
\textbf{spanning} to refer to path edges that begin and terminate in different boxes; and \textbf{non-spanning} to refer to path edges that begin and terminate in the same box.
Finally, we will define two path edges as a \textbf{pair} if the start and end nodes of both path edges are in a 2x2 grid configuration (regardless of if that grid configuration is within a box or across multiple boxes).
Thus, the unit path is comprised of two pairs of parallel and opposite non-spanning edges (a pair directed North-South and a pair directed East-West).

With these terms, we can discuss merging two paths into one or separating a single path into multiple and,
in the process, establish two lemmas.

\begin{lemma}\label{lem:merging}
    If path merging is performed on two separate planar Hamiltonian cycles, then the result will be one combined planar Hamiltonian cycle.
\end{lemma}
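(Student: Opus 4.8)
The plan is to treat path merging as a local edge-swap operation: pick a \textbf{pair} of parallel, opposite edges --- one edge $e_1$ belonging to the first cycle $C_1$ and one edge $e_2$ belonging to the second cycle $C_2$ --- whose four endpoints sit at the corners of a single $2\times2$ grid configuration, delete $e_1$ and $e_2$, and insert the complementary pair of parallel, opposite edges of that same $2\times2$ configuration (i.e., if $e_1,e_2$ are the two vertical sides, insert the two horizontal sides, and vice versa). The claim then splits into three checks that I would carry out in order: the result is a single cycle, it is Hamiltonian, and it is planar (non-self-intersecting).

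First I would establish the single-cycle property by a connectivity argument. Deleting one edge from each of the two separate (vertex-disjoint) cycles $C_1$ and $C_2$ turns each into an open path, say $P_1$ and $P_2$, leaving four loose endpoints at the corners of the $2\times2$ configuration. The inserted complementary pair joins the free ends of $P_1$ to the free ends of $P_2$, and gluing two disjoint paths end-to-end yields exactly one closed cycle. I would also verify that, with the inserted edges oriented as in the unit path, every vertex retains in-degree and out-degree one, so the merged object is a single consistently oriented cycle rather than two oppositely traversed loops.

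Next, Hamiltonicity is immediate and I would dispatch it quickly: the operation adds and removes no vertices, so the vertex set of the merged cycle is exactly $V(C_1)\cup V(C_2)$; since $C_1$ and $C_2$ each visited every vertex of their respective disjoint vertex sets exactly once, and the single-cycle property is already in hand, the merged cycle visits every vertex of the union exactly once.

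The hard part is planarity, and this is where I expect to spend most of the effort. The key observation is that the merge is confined to the interior of one $2\times2$ square: outside that square the drawing is identical to the disjoint union of $C_1$ and $C_2$, which is non-self-intersecting because the cycles are separate and each is individually planar. Inside the square the operation exchanges one pair of opposite sides for the other, and the two inserted sides neither cross each other (they are parallel) nor cross anything else, since the only edges previously occupying that square were precisely the two deleted ones. I would make this rigorous by arguing that $C_1$ lies entirely on one side of the shared boundary of the square and $C_2$ on the other, so the deleted edges are the only edges incident to the square's interior and the inserted edges can neither coincide with nor cross any surviving edge of either cycle. Concluding that no new intersection is introduced then finishes the proof.
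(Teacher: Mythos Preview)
Your proposal is correct and follows essentially the same approach as the paper: treat path merging as a local edge-swap confined to a single $2\times2$ configuration, observe that only the swapped pair of edges is affected, and conclude that the result remains a planar Hamiltonian cycle because everything outside the swap is untouched. The paper's own justification of Lemma~\ref{lem:merging} is considerably more terse---it essentially asserts the conclusion from Definition~\ref{def:planar_hamiltonian_cycle} after noting the locality of the operation---so your explicit decomposition into connectivity, Hamiltonicity, and non-self-intersection is more thorough than what the paper provides, but not a different argument.
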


\begin{lemma}\label{lem:separation}
    If path separation is performed on a planar Hamiltonian cycle, then the result will be separate planar Hamiltonian cycles on either side of the box connection where the separation occurred.
\end{lemma}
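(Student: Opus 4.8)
The plan is to prove this statement as the exact reverse of the merging operation in Lemma~\ref{lem:merging}, exploiting the symmetry between the spanning and non-spanning edges of a pair. First I would fix notation at the box connection where separation occurs: label the two boxes meeting across the connection, the four connection-facing nodes, the two connecting (spanning) grid edges $e_1,e_2$ that the cycle $C$ uses, and the two non-spanning grid edges $n_1,n_2$ of the same pair that separation inserts in place of $e_1,e_2$. I would take as a precondition of the operation (implicit in the phrase ``on either side of the box connection'') that this connection is the only bridge between the two sub-shapes, so that $e_1$ and $e_2$ are the only edges of $G$ crossing between the two sides.

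The core combinatorial step is to read off the cyclic structure of $C$ relative to the connection. Since $C$ is a single closed loop that can cross between the two sides only along $e_1$ and $e_2$, it must cross exactly twice, once in each direction (recovering the ``parallel and opposite spanning'' description of the pair). Hence $C$ decomposes as [arc $A$ lying wholly in side~1], then $e_1$, then [arc $B$ lying wholly in side~2], then $e_2$, closing back to the start of $A$. Because $C$ visits every node once and $B$ covers exactly the side-2 nodes, $A$ is a Hamiltonian path on side~1 whose endpoints are precisely the two side-1 connection nodes, and symmetrically for $B$. I would then observe that closing $A$ with the inserted non-spanning edge $n_1$, which joins exactly those two endpoints, yields a Hamiltonian cycle on side~1, and closing $B$ with $n_2$ yields one on side~2 --- the claimed pair of separate cycles.

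Two verifications remain. I would check that $n_1,n_2$ are genuinely new edges: if $n_1$ were already in $C$ it would have to lie on arc $A$, but $n_1$ directly joins the two endpoints of $A$, so a simple path $A$ containing it would reduce to that single edge, leaving the other side-1 nodes unvisited and contradicting that $A$ spans side~1. I would then argue planarity is preserved: $A$ and $B$ are subpaths of the non-self-intersecting cycle $C$ lying in disjoint regions, while $n_1$ and $n_2$ are box-side segments on the boundary facing the connection, so they cross neither the arcs nor each other; both resulting cycles are therefore planar.

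I expect the main obstacle to be the geometric planarity bookkeeping rather than the combinatorics. Splitting $C$ into two arcs and closing each into a cycle is routine once the ``exactly two crossings'' fact is established, but arguing cleanly that the newly inserted non-spanning edges introduce no self-intersection --- uniformly, regardless of how the arcs wind through each side --- requires care, and is most naturally handled by appealing to the same local pair geometry used to justify the merge in Lemma~\ref{lem:merging}.
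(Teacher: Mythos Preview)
Your proposal is correct and considerably more rigorous than the paper's own treatment, which is essentially a locality-plus-figure argument: the paper simply observes that only the four edges of the pair are touched, points to Fig.~\ref{fig:operations} (going from (b) to (a)), and asserts that since all other nodes and edges of the cycle are unchanged the result is two planar Hamiltonian cycles on either side, ``regardless of whether the post-separated paths are simple unit paths or larger, more complicated planar Hamiltonian cycles.'' There is no explicit arc decomposition, no check that $n_1,n_2$ are genuinely new edges, and no separate planarity bookkeeping; the picture carries the argument.

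One point worth noting: your explicit bridge precondition --- that $e_1,e_2$ are the only edges of $G$ crossing the connection --- is stronger than anything the paper states, and the paper later invokes the lemma (in Proof~\ref{claim_sub_pseudovalid}) at the sides of a branch box with several neighbors, where this can fail at the level of $G$. Your precondition is exactly what the literal phrase ``on either side'' needs, and your arc argument is the honest way to recover that geographic conclusion. But the combinatorial core of your argument survives without it: deleting $e_1,e_2$ from a single cycle always yields two vertex-disjoint simple paths, and closing each with its matching non-spanning edge yields two vertex-disjoint simple cycles, which is all the paper actually uses downstream. So your route buys real rigor at the price of an added hypothesis the paper leaves implicit; the paper's route buys brevity by leaning on the figure.
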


The first concept, \textbf{path merging}, involves separate paths merging into one path by creating a pair of spanning edges across a box connection.
During path merging, a pair of parallel and opposite non-spanning edges (one in each box on either side of the box connection) are replaced with a pair of parallel and opposite spanning edges (Fig.~\ref{fig:operations}a to b). 
The resulting path is a planar Hamiltonian cycle through all of the nodes in the combined path by Definition~\ref{def:planar_hamiltonian_cycle}.
Further, the only edges affected by this change are the pairs of edges that go from non-spanning to spanning; the rest of the edges and nodes are unchanged.
Therefore, it does not matter if the pre-merge paths were simple planar Hamiltonian cycles (such as the unit paths shown in Fig.~\ref{fig:operations}a), or if the pre-merged paths were larger, more complicated planar Hamiltonian cycles. 
Because the changed portions of the path are within Definition~\ref{def:planar_hamiltonian_cycle}, and since all other portions are unaffected, the result is still a planar Hamiltonian cycle. 
Thus, we can conclude Lemma~\ref{lem:merging}.

\begin{figure}[!t]
\centering
\includegraphics[width=50mm]{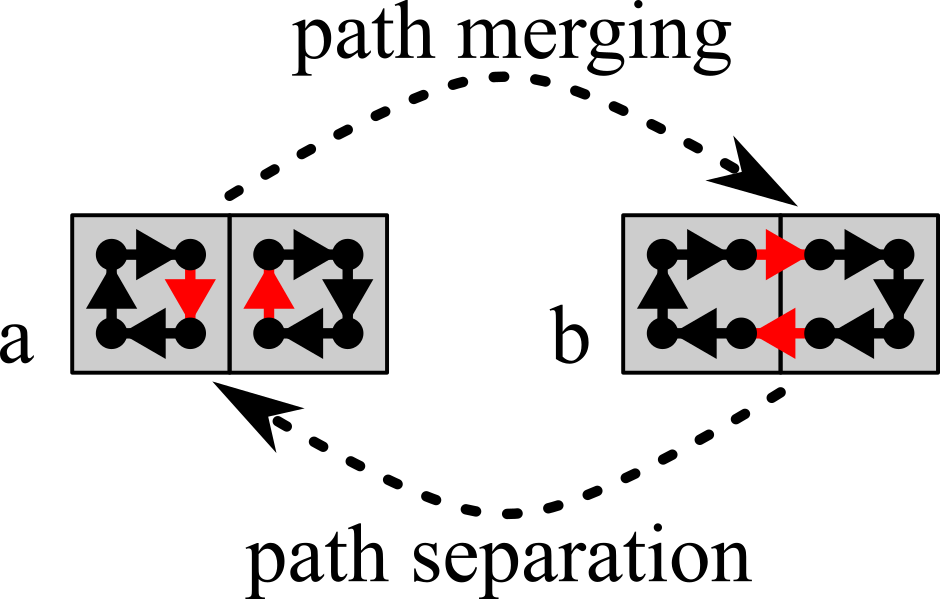}
\caption{The fundamental operations of path merging (going from $a$ to $b$) and path separation (going from $b$ to $a$). Red indicates impacted edges. Arrows indicate edge direction.
}
\label{fig:operations}
\end{figure}

The opposite concept, called \textbf{path separation}, involves one planar Hamiltonian cycle separating into multiple paths.
For this to occur, a pair of parallel and opposite spanning edges are replaced with a pair of parallel and opposite non-spanning edges (Fig.~\ref{fig:operations}b to a).
The result is two planar Hamiltonian cycles on either side of the box connection after separation.
Similar to path merging, the only edges affected are the pair of edges that go from spanning to non-spanning; the remainder of the shape nodes and path edges from the pre-separated planar Hamiltonian cycle are unchanged.
Therefore, it does not matter if the post-separated paths are simple unit paths, as shown in Fig.~\ref{fig:operations}a, or if the post-separated paths are larger, more complicated planar Hamiltonian cycles. 
In either case, path separation will always result in planar Hamiltonian cycles on either side of the box connection after the separation.
Thus, we can conclude Lemma~\ref{lem:separation}.

\subsection{Planar Hamiltonian Cycles in Valid Shapes}\label{sec:valid_shapes}
Earlier, we claimed that a planar Hamiltonian cycle can always be found for any valid shape, and we suggested that the process of finding a cycle would scale linearly with the number of boxes in the shape ($O(\mathcal{B})$ scales linearly with $\mathcal{B}$).
To prove these claims, we establish four things: the assembly of valid shapes by sequential box addition, path merging during shape assembly, \textbf{periphery nodes}, and \textbf{periphery edges}. 

First, consider the assembly of valid shapes. 
Since valid shapes are constructed from boxes, any valid shape can be assembled by adding boxes together in an arbitrary sequence until the desired final shape is achieved.
For example, the shape in Fig.~\ref{fig:random_box_add}f could be constructed by assembling boxes in the order 0,4,3,2,1,5 or 0,1,2,3,4,5, etc.
We can further constrain assembly so boxes are only added together such that a valid shape is maintained after each addition (i.e., each additional box is added to be adjacent to an existing box).
To continue the example under such restrictions, 0,4,3,2,1,5 would no longer be a valid sequence for assembling the shape in Fig.~\ref{fig:random_box_add}f, but 0,1,2,3,4,5 would be valid because it maintains a valid shape after each additional box.

Next, consider the concept of path merging during shape assembly under the valid shape constraint. 
Every time a box is added to a valid shape, its unit path could be merged with the existing path via path merging. 
By Lemma~\ref{lem:merging}, this would result in a new planar Hamiltonian cycle through the shape.
For example, when box 2 is assembled with boxes 0 and 1 in Fig.~\ref{fig:random_box_add}c, the unit path in box 2 is merged with the existing path in boxes 0 and 1. 
Specifically, the bottom edge of the path in box 0 and the top edge of the unit path in box 2 are the pair of parallel and opposite non-spanning edges that are turned into spanning edges to merge the paths.
The result is a new planar Hamiltonian cycle through boxes 0, 1, and 2.

\begin{figure}[t]
\centering
\includegraphics[width=80mm]{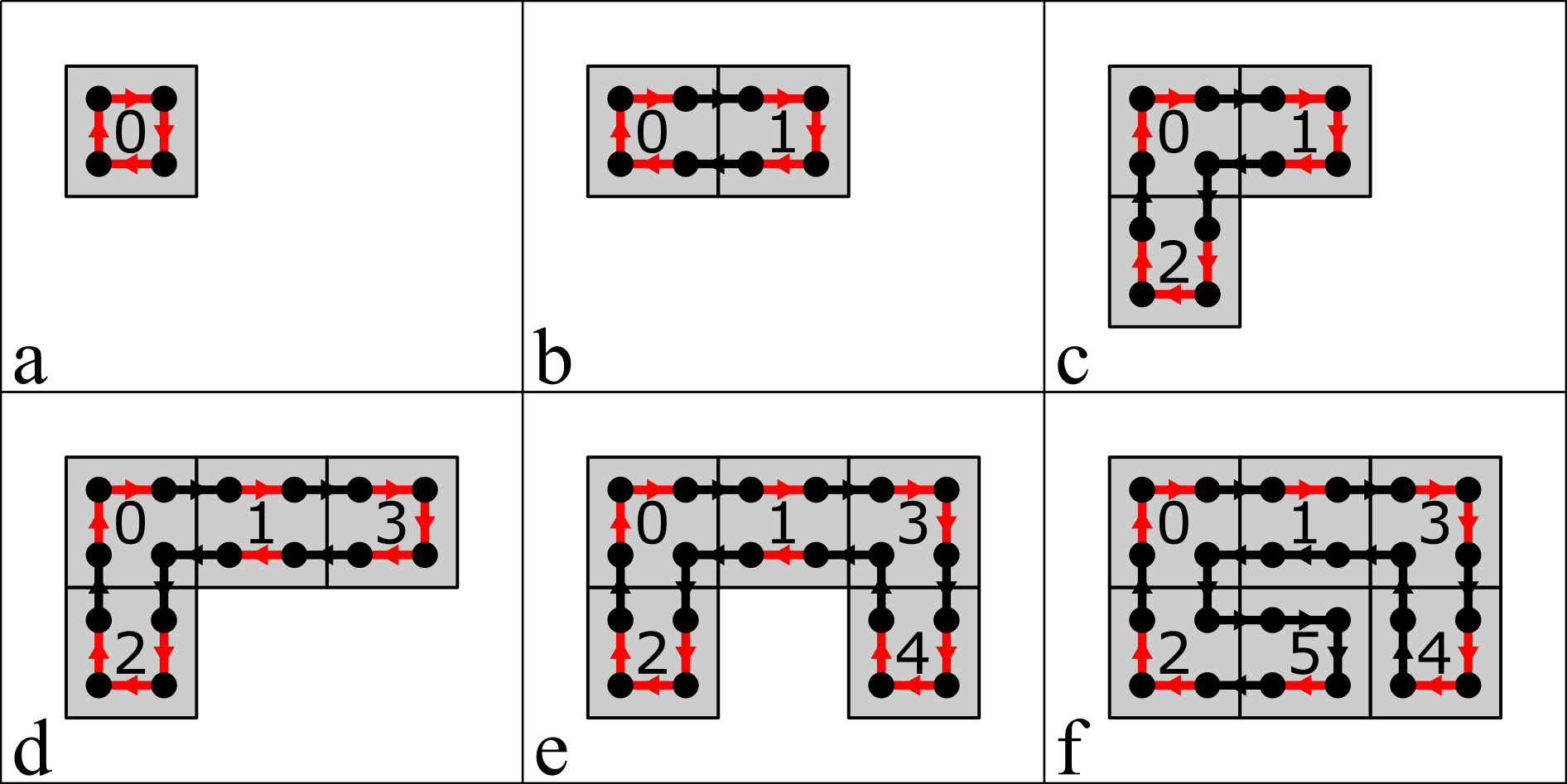}
\caption{An arbitrary sequence of shape construction (a-f). Each additional box creates a new valid shape and valid path. Periphery edges are red. Arrows indicate edge direction.
}
\label{fig:random_box_add}
\end{figure}

Finally, we define a \textbf{periphery node} as an in-shape node on the periphery of the shape and a \textbf{periphery edge} as a non-spanning edge between two periphery nodes.
Periphery edges represent edges that could be merged with unit paths in new boxes when new boxes are added to the shape.
If every periphery node is connected by at least one periphery edge, then a new box could be added anywhere along the periphery of the shape and its unit path could always merge with the existing path.
For example, every periphery node in the shape in Fig.~\ref{fig:random_box_add}c is connected by at least one periphery edge, so a new box could be added to any side on the periphery of the three-box shape.


\begin{theorem}\label{claim_random}
A planar Hamiltonian cycle exists for any valid shape.
\end{theorem}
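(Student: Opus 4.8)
The plan is to prove the statement by induction on the number of boxes $\mathcal{B}$ in the valid shape, building the cycle incrementally using the same box-by-box assembly already described for Fig.~\ref{fig:random_box_add} and fusing each newly added box's unit path into the running cycle with a single application of path merging (Lemma~\ref{lem:merging}). The bare existence claim, however, is not strong enough to carry the induction: to merge the unit path of a freshly added box, we must be guaranteed a non-spanning periphery edge of the current cycle running along the side where the box attaches. I would therefore strengthen the inductive hypothesis to the invariant that \emph{every valid shape admits a planar Hamiltonian cycle in which every periphery node is incident to at least one periphery edge}. Proving this stronger statement yields Theorem~\ref{claim_random} as an immediate corollary.

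For the base case ($\mathcal{B}=1$), the unit path is a planar Hamiltonian cycle by Definition~\ref{def:planar_hamiltonian_cycle}, and a direct inspection of its four non-spanning edges confirms that each of the box's periphery nodes is incident to a periphery edge, so the invariant holds. For the inductive step, assume the invariant for every valid shape of $k$ boxes and let $S$ be a valid shape of $k+1$ boxes. Since $S$ can be assembled one box at a time while remaining valid at every stage, write $S = S' \cup \{B^{*}\}$ where $S'$ is a valid $k$-box shape and $B^{*}$ is a box sharing a full side with $S'$. By hypothesis $S'$ has a planar Hamiltonian cycle $C'$ satisfying the invariant; in particular, the boundary side along which $B^{*}$ attaches carries a periphery edge of $C'$. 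This edge and the corresponding non-spanning edge of the unit path in $B^{*}$ form a pair of parallel, opposite, non-spanning edges, so path merging applies and, by Lemma~\ref{lem:merging}, produces a planar Hamiltonian cycle $C$ over all nodes of $S$.

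The main obstacle is re-establishing the invariant for $C$, since the theorem is only as strong as the induction that proves it. Here I would argue by a local case analysis confined to the neighborhood of the new box connection: merging converts exactly one periphery edge of $C'$ into a spanning edge, while $B^{*}$ contributes its remaining non-spanning unit-path edges along its exposed sides. I would enumerate the ways $B^{*}$ can meet $S'$ (attaching along one, two, or three sides of the new box, including the corner cases that close off a concave notch) and verify in each that every periphery node of $S$ — both the nodes newly exposed by $B^{*}$ and any node whose incident periphery edge was consumed by the merge — retains at least one incident periphery edge. Because only edges and nodes within a bounded window around the connection change, this check is finite and the rest of $C'$ is untouched, preserving both planarity and the Hamiltonian property.

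Finally, the construction also substantiates the complexity claim: each of the $\mathcal{B}$ box additions performs one merge and one local invariant update, each $O(1)$, so a satisfying cycle is produced in $O(\mathcal{B})$ time. I expect the case analysis for the invariant to be the only delicate part; the existence and planarity of $C$ follow almost mechanically from Lemma~\ref{lem:merging} once the periphery-edge invariant furnishes a legal merge site at every step.
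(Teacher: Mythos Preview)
Your proposal is correct and matches the paper's own proof almost exactly: the paper also argues by induction on $\mathcal{B}$, strengthens the inductive hypothesis with precisely your periphery-edge invariant (its ``property~3''), and invokes Lemma~\ref{lem:merging} to merge each new box's unit path into the running cycle. The only difference is that where you propose a case analysis on the number of sides along which $B^{*}$ attaches, the paper dispatches the invariant more briefly by observing that the merged edge is no longer on the periphery while all other periphery edges (including those contributed by the new box's exposed sides) are untouched.
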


\begin{proof}[\proofname\ 1]\label{prf:hamiltonian_cycles}
This is a proof by induction. 
Consider the base case of the unit box ($\mathcal{B}=1$).
This shape has three important properties. 
Trivially, it is a valid shape (property 1) and its path is a planar Hamiltonian cycle (property 2).
Further, all periphery nodes are connected by at least one periphery edge (property 3).
For example, see Fig.~\ref{fig:random_box_add}a.

Next, assume there exists a shape of $\mathcal{B}=K$ boxes, where $K$ is some integer and $K\geq1$. 
Assume the shape has properties 1, 2, and 3. 
A new valid shape of $\mathcal{B} = K+1$ boxes could be generated by adding a new box next to an existing box in the $\mathcal{B} = K$ shape such that the two boxes share an entire side. 
This maintains property 1.
The unit path in the newly added box could be merged with one of the periphery edges on the existing path via path merging to maintain property 2 (by Lemma~\ref{lem:merging}).

Further, by creating the new path in this way, property 3 is maintained.
This is because the portions of the existing path not involved in the path merge are unchanged and any of the sides of the new box that are on the periphery of the shape have a periphery edge between two periphery nodes.
Neither of the edges used in the path merge are on the periphery of the shape, so the $\mathcal{B} = K+1$ path maintains property 3. 
Examples of this can be seen in each box addition in Fig.~\ref{fig:random_box_add}.

Since all three properties are maintained for the case of $\mathcal{B}=1$ and for the transition from $\mathcal{B}=K$ to $\mathcal{B}=K+1$ for some $K\geq1$, we can conclude by induction that Theorem~\ref{claim_random} is valid for any valid shape of size $\mathcal{B}\geq1$.
\end{proof}

Based on Proof~\ref{claim_random}, since we can find a Hamiltonian cycle through a shape by incrementally adding boxes together until we reach that shape, we can conclude that time to find the Hamiltonian cycle would scale with the number of boxes in the shape ($O(\mathcal{B})$ scales linearly with $\mathcal{B}$).

Proof~\ref{claim_random} does not rely on the order in which boxes are assembled to form a shape so long as each additional box results in a valid shape.
Furthermore, when a new box is added to an existing shape, it does not matter which periphery edge is used in the path merge.
For example, when box 5 is added to the shape in Fig.~\ref{fig:random_box_add}, the periphery edge in box 2 was used to merge with the unit path in box 5. 
The nearest periphery edges in box 1 or box 4 could have also been used and the result would still have been a planar Hamiltonian cycle.

Finally, because path merging is used in Proof~\ref{claim_random} to merge unit paths when each additional box is added to the shape, the resulting planar Hamiltonian cycles have two particular properties.
First, each non-spanning edge is directed clockwise around the center of its box because each non-spanning edge originated in a unit path which is clockwise by definition.
Second, each spanning edge is part of a pair of parallel and opposite edges that span across the same box connection.
In fact, if a tree data representation were used to track the order in which boxes were added to the shape, one would see that these pairs span from parent to child in the tree.
For example, box 4 is a child of box 3 when it is added to the shape in Fig.~\ref{fig:random_box_add}e, and a pair of parallel and opposite edges span this parent-child box connection.

We need a term to differentiate planar Hamiltonian cycles that have these two distinct properties from other planar Hamiltonian cycles that do not. 
In addition, we know from practice that planar Hamiltonian cycles are broken at adjacent start and end nodes to facilitate the movement of robots to and from a charging station. 
Thus, we define a \textbf{valid path} as a planar Hamiltonian cycle with adjacent entry and exit nodes on the periphery of the shape such that each non-spanning edge is directed clockwise around the center of its box and each spanning edge is part of a pair of parallel and opposite edges spanning across the same box connection.



\subsection{Planar Hamiltonian Cycles via the Default Behavior}\label{sec:default_makes_preferred}
Earlier we claimed that the default behavior always results in a planar Hamiltonian cycle (for any valid shape).
As before, to prove this claim, we have to introduce a new concept: ordered assembly of valid shapes via the \textbf{depth-first clockwise-priority (DFCP) method}.
Also, we will define a \textbf{preferred path} as a planar Hamiltonian cycle that is generated by the default behavior and a \textbf{DFCP path} as a planar Hamiltonian cycle that is generated by the  DFCP method (and later, we will show that preferred paths and DFCP paths are identical).

We established in Section~\ref{sec:valid_shapes} that a valid shape can be constructed via the assembly of boxes in an arbitrary order.
We then constrained the assembly such that each additional box had to maintain a valid shape, and we found that this resulted in valid paths.
However, the order in which boxes were assembled remained arbitrary; we only enforced the valid shape constraint.
If we restrict the order in which boxes are assembled while enforcing the valid shape constraint, then we can create a subset of valid paths (called DFCP paths) since a sequence of box additions in a particular order is a subset of all box additions in arbitrary orders.

\begin{figure*}[!t]
\centering
\includegraphics[width=0.9\textwidth]{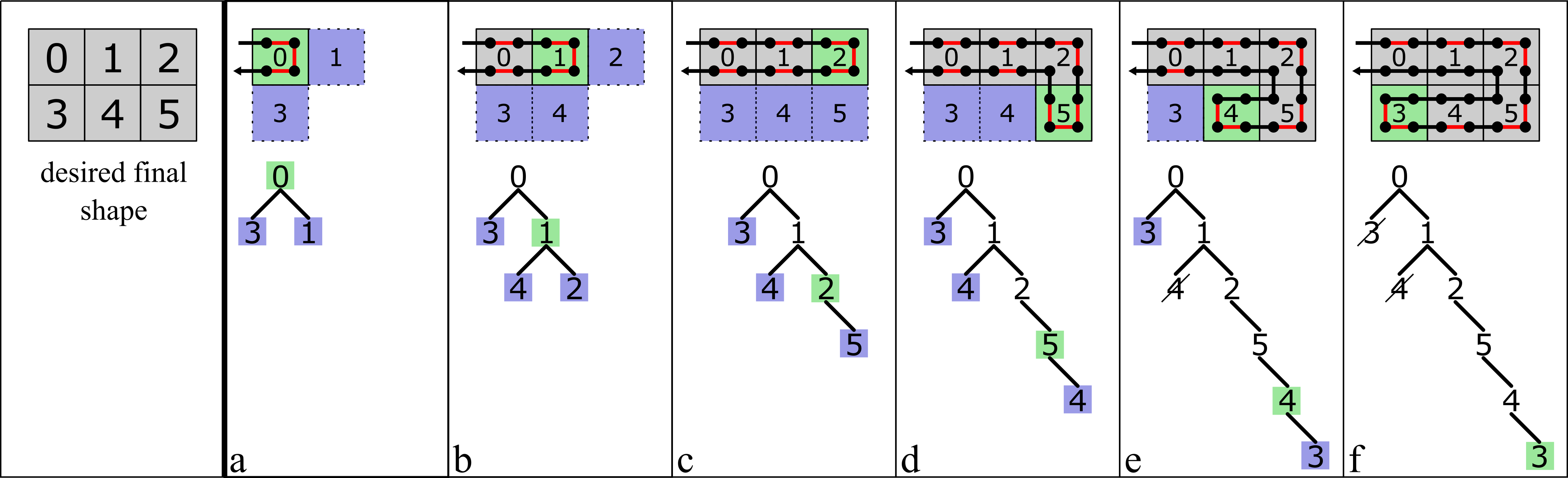}
\caption{(left) A desired final shape of 6 boxes and (right) a sequence of temporary shapes and corresponding tree data structures to produce the desired final shape (a-f).  Periphery edges are in red. Arrows indicate path direction.  
Green identifies the most recently added box to the shape, and blue identifies the frontier.}
\label{fig:cwp}
\end{figure*}

In particular, in the DFCP method, we restrict the order in which boxes are assembled to mimic that of a classical depth-first search.
Every time a box is added to a shape, we update the frontier of plausible next boxes.
The frontier is updated with all boxes that will be in the desired final shape, that have not already been added to the shape, and that share a side with the box most recently added to the shape (this ensures each box addition results in a valid shape).
Any box appended to the frontier is also incorporated in a tree data structure representation of the boxes identified thus far.
In the case where multiple boxes are appended to the frontier at the same time, each box is included in the tree right to left corresponding to a clockwise order.
In other words, an imaginary vector is drawn from the center of the box most recently added to the shape towards the center of that box's parent in the tree (or to the periphery of the shape in the case of the root box).
That vector is swept clockwise around the center of the most recently added box.
As the vector is swept, boxes in the frontier are added to the tree right to left in the order in which the vector points towards those boxes.
For example, boxes 2 and 4 are added to the tree data structure from right to left in Fig.~\ref{fig:cwp}b because a vector centered at box 1 and pointing toward box 0 sweeps clockwise first through box 2 and then through box 4.

Once all boxes in the frontier have been added to the tree, the deepest unexplored node in the tree will be added to the shape next.
In the case of a tie for deepest unexplored node, boxes are added to the shape from right to left.
This results in boxes added to the shape in accordance with a classical depth-first search where ties are broken in a clockwise-priority manner (thus the name: depth-first clockwise-priority method).
Once a box is added to the shape, all other instances of it are removed from the frontier.
The process continues until there are no other potential boxes to add, and the shape matches the desired final shape (e.g., Fig.~\ref{fig:cwp}).

After each addition to the shape, the path is adjusted in the same way as Proof~\ref{claim_random}: the unit path of the new box is merged with the existing path via path merging. 
However, unlike before, in the DFCP method we constrain the merge so that the periphery edge of the existing path is in the parent of the new box.
For example, when box 3 is added to the shape in Fig.~\ref{fig:cwp}f, the periphery edge in box 4 is merged with the unit path of box 3 instead of the periphery edge through box 0 because box 4 is the parent of box 3 in the tree.

The DFCP method results in DFCP paths (e.g., Fig.~\ref{fig:cwp}f).
As a subset of valid paths, DFCP paths are also planar Hamiltonian cycles with the two unique properties of valid paths: clockwise non-spanning edges and spanning edge pairs between parent and child boxes.
The only difference is the order in which the path navigates through boxes.
With this established, we can now prove our claim, and in the process we will establish two additional lemmas.

\begin{lemma}\label{lem:influence_up}
    A robot's path is not influenced by the presence of a box until the robot reaches that box.
\end{lemma}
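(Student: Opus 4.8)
The plan is to prove this by induction on the robot's step count, comparing two executions of the default behavior (Algorithm~\ref{alg:default_behavior}): one on a valid shape $S'$ that contains the box $X$, and one on the shape $S = S' \setminus X$ obtained by deleting $X$ (together with any boxes reachable only through $X$, so that $S$ remains valid). I would show that the two runs produce an identical sequence of visited nodes, identical $V_n$ and $V_B$, and identical parent assignments, up until the first step at which the robot enters $X$. Since the only difference between $S$ and $S'$ is the in-shape status of the nodes contributed by $X$, this is exactly the statement that $X$ exerts no influence on the path before the robot reaches it.

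The core observation I would rely on is that every input consumed by Algorithm~\ref{alg:default_behavior} at a node $n$ is either \emph{local} or \emph{historical}: the four candidate edges and their endpoints' in-shape/visited status (a bounded neighborhood of $n$), the current box $b$ together with its clockwise orientation, the parent of $b$, and the accumulated sets $V_n$ and $V_B$. None of these can encode the existence of a not-yet-reached box. First I would dispose of Rules~2 and~3: Rule~2 only ever selects an edge that stays inside the current box $b$, and $b \neq X$ before $X$ is reached; Rule~3 selects an edge into the parent of $b$, which is necessarily a box the robot has already visited (hence in $V_B$, hence distinct from the unvisited $X$). Consequently only Rule~1 can route the robot into $X$, and it can do so only when one of $n$'s neighbors lies in $X$ --- i.e.\ precisely at the moment the robot reaches $X$. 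I would define $t^\ast$ as this first step and carry the induction over all steps $t < t^\ast$.

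For the induction itself, the base case is immediate: both runs start at the common entry node with $V_n$ and $V_B$ initialized identically. For the inductive step, assuming the prefixes agree through step $t < t^\ast$, the robot sits at the same node $n$ in the same box $b$ with identical $V_n$, $V_B$, and parent data in both runs. Its four neighbors are the same grid nodes, and their in-shape status agrees in $S$ and $S'$ except possibly for a neighbor lying in $X$; but by the definition of $t^\ast$, Rule~1 does not select such an edge before step $t^\ast$, so the chosen edge is one whose endpoint status is identical in both shapes, yielding the same decision and preserving all of $V_n$, $V_B$, and the parent assignments for step $t+1$.

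The step I expect to be the main obstacle is justifying that the parent relation used by Rule~3 is genuinely causal, since at first glance it looks like a global property of the final tree that could reference $X$. The crux is to argue that the robot's parent assignment for box $b$ is fixed the instant it first crosses a spanning edge into $b$ --- it is simply the box it came from --- so it is a function of the path prefix alone and therefore, by the inductive hypothesis, independent of $X$; in particular $b$'s parent is always an already-visited box and can never be the unreached $X$. A secondary subtlety I would have to address is the tie case in Rule~1 where $n$ is adjacent both to $X$ and to another unvisited box: here I need that deleting the unselected edge into $X$ does not perturb the selection among the remaining edges, which holds because Rule~1 breaks ties by a fixed local (clockwise) priority, so removing a lower-priority candidate leaves the winner unchanged.
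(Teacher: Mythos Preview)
Your proposal is correct and takes a genuinely different route from the paper. In the paper, Lemma~\ref{lem:influence_up} is never given a standalone argument: it is extracted as a one-line remark inside the inductive step of Proof~2 (Theorem~\ref{claim_default}), where the authors observe that because the DFCP assembly order and the default behavior visit boxes in the same depth-first clockwise-priority order, the box $Y$ appended at stage $K{+}1$ is the \emph{last} box the robot encounters, so the trajectory through the first $K$ boxes coincides with the $\mathcal{B}=K$ case; that observation is then labeled ``this gives us Lemma~\ref{lem:influence_up}.'' Your approach instead couples two independent executions of Algorithm~\ref{alg:default_behavior} on $S'$ and on $S'\setminus X$ and proves step-by-step agreement up to the first entry into $X$ by analyzing the locality of each rule. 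What the paper's route buys is economy---the lemma falls out of an induction already in progress---while your route buys independence and generality: it applies to an arbitrary box $X$ rather than only the DFCP-terminal one, and it does not presuppose the very DFCP/default-behavior correspondence that Proof~2 is establishing, which is arguably cleaner given how the lemma is later invoked in Section~\ref{sec:shape_adaptability}. One small remark on your Rule~1 tie case: Algorithm~\ref{alg:default_behavior} as written does not spell out a tie-break, so rather than asserting a clockwise priority you only need (and should state) that the selection is deterministic and list-order stable---removing a non-winning candidate leaves the winner unchanged---which is the natural reading of the pseudocode and suffices for your argument.
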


\begin{lemma}\label{lem:influence_down}
    A robot's path is not influenced by the presence of a box after the robot has visited all nodes within that box.
\end{lemma}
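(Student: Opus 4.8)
The plan is to argue entirely from the locality of the default behavior (Algorithm~\ref{alg:default_behavior}): the edge chosen at a node $n$ depends only on $n$'s four grid-neighbors together with their in-shape, visited-node ($V_n$), and visited-box ($V_B$) status, the clockwise orientation within the current box, and the identity of the box from which the robot entered its current box (its Rule~3 parent). Consequently, a box $B$ can affect a decision only in one of three ways: (i) a node of $B$ appears as a candidate neighbor in $\mathcal{N}$, (ii) $B$ is tested as an unvisited box in Rule~1, or (iii) $B$ is the parent referenced in Rule~3. I would show that once every node of $B$ lies in $V_n$, none of (i)--(iii) can recur, so the remainder of the path is computed identically whether or not $B$ is present.

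First I would dispatch (i) and (ii). After all four nodes of $B$ are visited, they all belong to $V_n$, so the first loop of Algorithm~\ref{alg:default_behavior} strips every edge leading into $B$ from $\mathcal{E},\mathcal{N}$ before any rule is evaluated; hence no $B$-node is ever again a candidate, ruling out (i), and no edge into $B$ can be selected by Rules~1 or~2. Case (ii) is immediate, since $B\in V_B$ makes the Rule~1 test $\mathrm{Box}(n)\notin V_B$ false for every $B$-node.

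The main work, and the main obstacle, is ruling out (iii): showing the robot never again needs $B$ as a Rule~3 parent once $B$ is fully visited. The key claim to establish is that the robot visits the last node of $B$ precisely on the backtracking step that carries it out of $B$ into $B$'s own parent, at which instant every descendant of $B$ has already been fully explored. I would prove this by induction on the size of $B$'s subtree, analysing the interleaving of Rule~1 (detour into an adjacent unvisited box) and Rule~2 (continue clockwise): entering $B$ from its parent, the robot sweeps the nodes of $B$ clockwise, and Rule~1's priority forces it to exhaust each adjacent child subtree at the moment that subtree's shared edge is encountered, returning to $B$ by Rule~3 before resuming the sweep. Thus all children are exhausted by the time the fourth node of $B$ is reached, the only remaining move is the Rule~3 step back to $B$'s parent, and after that step the robot has left $B$'s subtree entirely, so no later box has $B$ as parent.

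Having excluded (i)--(iii), I would conclude that from the moment all of $B$ is visited, every subsequent decision is independent of $B$'s presence, which is the statement of the lemma. I expect the delicate point to be the clockwise-sweep-with-detours claim above, since it must be argued from the rules alone without presupposing that the default behavior already yields a valid Hamiltonian cycle. An appealing alternative is to derive Lemma~\ref{lem:influence_down} from Lemma~\ref{lem:influence_up} by a path-reversal symmetry: reversing a clockwise preferred path yields a counter-clockwise one and swaps ``after fully visiting $B$'' with ``before reaching $B$,'' so Lemma~\ref{lem:influence_up} for the counter-clockwise behavior would give the result; this route is cleaner but requires first establishing that reversal symmetry.
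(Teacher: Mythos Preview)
Your proposal is correct and takes a genuinely different route from the paper. In the paper, Lemma~\ref{lem:influence_down} is not proved independently at all: it is extracted as a one-line remark inside Proof~2 of Theorem~\ref{claim_default}. There the inductive step adds a single box $Y$ (the deepest rightmost unexplored box in the DFCP order) to a $K$-box shape and observes that after the robot completes its 5-edge loop through $Y$ and returns to $Y$'s parent, only Rules~2 and~3 apply for the remainder of the traversal, and neither references $Y$; the paper simply records this observation as Lemma~\ref{lem:influence_down}. Because $Y$ is by construction a leaf of the DFCP tree, the paper never has to confront your case~(iii) explicitly.

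Your approach is a direct locality analysis of Algorithm~\ref{alg:default_behavior}, independent of the DFCP assembly. This buys a self-contained argument that applies to an arbitrary box $B$ on its own terms, which is what is actually needed later (Proof~4 invokes the lemma for a general removed box $R$, not the last-added one). The price is that you must deal with~(iii) head-on; your plan---showing by induction on subtree size that every descendant of $B$ is exhausted before $B$'s fourth node is visited, so the Rule~3 step out of $B$ is the moment $B$ becomes fully visited---is exactly the depth-first guarantee the paper gets implicitly from its DFCP/default-behavior correspondence. So the two arguments ultimately meet at the same structural fact, but yours isolates it cleanly while the paper's leaves it bundled inside the induction for Theorem~\ref{claim_default}. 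Your reversal-symmetry alternative is not in the paper and would indeed be cleaner if the symmetry were established first.
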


\begin{theorem}\label{claim_default}
    The default behavior always results in a planar Hamiltonian cycle for any valid shape.
\end{theorem}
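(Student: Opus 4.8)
The plan is to reduce Theorem~\ref{claim_default} to the identity between preferred paths and DFCP paths. Since the DFCP method is just the clockwise-priority, depth-first restriction of the arbitrary box-assembly of Proof~\ref{claim_random}, every DFCP path is a valid path and hence a planar Hamiltonian cycle. So it suffices to show that the path a robot traces under the default behavior (its preferred path) is exactly the path the DFCP construction produces on the same valid shape; the theorem then follows at once, and this also retroactively justifies calling the output a planar Hamiltonian cycle.

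To set up the correspondence I would fix a valid shape $S$ together with the rooted box-tree $T$ that the DFCP method builds, rooted at the box holding the entry and exit nodes. The DFCP path can be read off $T$ as a depth-first traversal: within each box the non-spanning edges are taken clockwise, a pair of spanning edges descends into a child and later returns, and children are explored in clockwise order. I would then pair the three default-behavior rules with this traversal: Rule~1 (take an edge into a not-yet-visited box) is descent into an unexplored child, Rule~2 (move clockwise within the current box) is the clockwise non-spanning motion, and Rule~3 (move to the parent box) is the backtrack that occurs once a subtree is exhausted.

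I would then prove the identity by induction on the number of boxes, with Lemmas~\ref{lem:influence_up} and~\ref{lem:influence_down} doing the work in the inductive step. The base case is the unit shape, where the default behavior yields the clockwise unit path, which is exactly the DFCP path. For the step, track the robot from the moment it enters a box $b$ (at the entry node if $b$ is the root, otherwise via the spanning edge from $b$'s parent). As it sweeps clockwise through $b$ under Rule~2, each time a spanning edge offers an unvisited adjacent box, Rule~1 fires and the robot descends into that child's subtree. By Lemma~\ref{lem:influence_up} the boxes it has not yet reached (the remaining children and all their descendants) cannot perturb its choices while it is inside the current child's subtree, so the robot traverses that subtree exactly as it would the corresponding standalone shape of strictly fewer boxes; the induction hypothesis then identifies this traversal with the child's DFCP path, and the robot returns to $b$. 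By Lemma~\ref{lem:influence_down} the now-completed child no longer influences the robot, so it resumes clockwise motion in $b$ (Rule~2), and once $b$ is exhausted Rule~3 returns it to $b$'s parent. Matching the resulting edge sequence against the depth-first traversal of $T$ closes the induction.

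The main obstacle I expect is the geometric bookkeeping that aligns Rule~1's edge selection with the clockwise tie-break used to build $T$. Concretely, I would have to verify that as the robot visits the corner nodes of a box clockwise under Rule~2, the spanning edges leading to distinct unvisited child boxes are offered to Rule~1 in precisely the clockwise order in which the DFCP method appends those children to $T$, and that the broken cycle at the adjacent entry/exit nodes in the root box does not disrupt this alignment. The delicate sub-cases are boxes reachable from $b$ along more than one shared side and shapes with holes, where one must confirm the parent--child connection is well defined and consistent with $T$; the remainder is routine once the two locality lemmas are in hand.
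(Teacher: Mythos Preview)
Your proposal is correct and shares the paper's overall strategy: show that the preferred path coincides with the DFCP path, then inherit the planar Hamiltonian cycle conclusion from Proof~\ref{claim_random}. The difference is in the inductive decomposition. The paper inducts by adding boxes one at a time in DFCP order: the new box $Y$ is always the \emph{last} box the robot reaches, so Lemmas~\ref{lem:influence_up} and~\ref{lem:influence_down} localize its effect to a single 5-edge loop inserted into the $\mathcal{B}=K$ path, and the two lemmas are in fact derived as byproducts of this step rather than assumed. You instead recurse on the tree structure, peeling off child subtrees of the current box and applying the hypothesis to each as a smaller standalone shape. Your decomposition mirrors the recursive shape of depth-first search more directly, but it forces you to check that the induction hypothesis applies to a subtree whose entry/exit pair sits at the parent--child connection rather than at a fixed periphery location, and it leaves you with the multi-child clockwise-ordering bookkeeping you flag as the main obstacle. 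The paper's one-box-at-a-time step sidesteps that bookkeeping almost entirely, since only a single leaf is ever in play, at the cost of the slightly less transparent claim that the DFCP method and the default behavior visit boxes in the same global order. Both routes are sound; the paper's is shorter, yours is structurally cleaner.
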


\begin{proof}[\proofname\ 2]
This is a proof by induction that will show that the DFCP path produced by the DFCP method and the preferred path produced by the default behavior are identical for any valid shape as that shape is built up one box at a time.

Consider the base case of the unit box ($\mathcal{B}=1$).
The DFCP path is the unit path with an entry and exit node (e.g., Fig.~\ref{fig:cwp}a). 
Now, consider the preferred path formed by the default behavior.
After entering the shape, the robot never sees any new boxes or previously visited boxes (there are none!), so only Algorithm~\ref{alg:default_behavior} rule 2 applies.
The robot moves from node to node in a clockwise manner until it reaches the exit node and exits the shape.
The resulting path is identical to the DFCP path.

Next, assume there exists a shape of $\mathcal{B}=K$ boxes, where $K$ is some integer and $K\geq1$.
Assume that the shape has a DFCP path that matches the preferred path generated by the default behavior.
One additional box (call it box $Y$) is added to the shape in accordance with the DFCP method for a total of $\mathcal{B} = K + 1$ boxes.
A path merge between the existing $\mathcal{B}=K$ path and the unit path of box $Y$ will result in a new DFCP path that passes through box $Y$ via a 5-edge loop between the two nodes nearest to (and in the parent of) box $Y$: one spanning edge into box $Y$, three non-spanning edges within box $Y$, and one spanning edge back into the parent of box $Y$.
No other portions of the preferred path will be changed from the $\mathcal{B} = K$ case.
For example, see the addition of box 5 to the shape in Fig.~\ref{fig:cwp}d.

A new preferred path generated by the default behavior will match this new DFCP path because the DFCP method and the default behavior explore boxes in the same manner. 
For the DFCP method, adding box $Y$ to the shape can be thought of as expanding the node in the tree that is the deepest and rightmost unexplored node (where rightmost is analogous to clockwise-priority).
Similarly, when a robot is executing the default behavior, it traverses the shape in a depth-first manner due to the priority of Algorithm~\ref{alg:default_behavior} rule 1 over Algorithm~\ref{alg:default_behavior} rule 2. 
When this depth-first precedent is combined with the clockwise priority of Algorithm~\ref{alg:default_behavior} rule 2, one can conclude that a robot will see and enter neighboring boxes in a depth-first clockwise-priority order.

Since the DFCP method and the default behavior explore boxes in the same order, box $Y$ is both the deepest and rightmost unexplored node in the DFCP method and the last unvisited box the robot reaches when traversing the $\mathcal{B}=K+1$ shape via the default behavior.
This means the robot's path is not influenced by the presence of box $Y$ until the robot reaches box $Y$ (this gives us Lemma~\ref{lem:influence_up}).
Thus, the DFCP path and the preferred path are unchanged from the $\mathcal{B}=K$ case (and identical to one another) for all portions of the path prior to box $Y$.
Upon reaching box $Y$, the robot executing the default behavior will follow Algorithm~\ref{alg:default_behavior} rule 1 to enter box $Y$. 
When in box $Y$, there are no other unvisited boxes, so Algorithm~\ref{alg:default_behavior} rule 1 will not trigger for the rest of the robot's motion through the shape.
The robot will move clockwise in box $Y$ (Algorithm~\ref{alg:default_behavior} rule 2) and then move into the parent of box $Y$ (Algorithm~\ref{alg:default_behavior} rule 3).
This is the same as the 5-edge loop that the DFCP path takes through box $Y$ back to the parent of box $Y$.
Finally, for the remainder of the shape, the robot will fill out previously visited boxes using Algorithm~\ref{alg:default_behavior} rule 2, and then move up the tree to parent boxes using Algorithm~\ref{alg:default_behavior} rule 3. 
This means after the robot exits box $Y$, the presence of box $Y$ does not effect the behavior in any way (this gives us Lemma~\ref{lem:influence_down}).
It also means that the robot's path (via the default behavior) after box $Y$ is the same as in the $\mathcal{B}=K$ case which, more importantly, is also the same as the DFCP path.
So, in summary, the DFCP path and the preferred path are identical for all portions of the path prior to, within, and after box $Y$.

Since the DFCP path and the preferred path are identical for the case of $\mathcal{B}=1$ and for the transition from $\mathcal{B}=K$ to $\mathcal{B}=K+1$ for some $K\geq1$, we can conclude by induction that the default behavior will always produce a preferred path identical to the DFCP path for any valid shape of $\mathcal{B}\geq1$.
And, since we know DFCP paths are a type of planar Hamiltonian cycle, we know Theorem~\ref{claim_default} is valid.
Further, we know preferred paths are a subset of valid paths with the two unique properties of valid paths: clockwise non-spanning edges and spanning edge pairs between parent and child boxes.
\end{proof}

\section{Shape Adaptability}\label{sec:shape_adaptability}
Once a swarm has formed a persistent shape in space, the swarm must remain adaptable to shape changes (the addition or removal of a box from the shape).
The robots must reconcile these changes using only local information and making only local path modifications since no single robot can have global influence over the swarm.
To do this, we simplify adaptability into three basic steps: 1) detection,  2) primary changes to fill in a new box or empty a removed box, and 3)~secondary changes to return the swarm to the preferred path.

To explain these steps, we will use \textbf{downstream} to refer to a robot that has previously visited a particular grid node, and \textbf{upstream} to refer to a robot that has yet to visit a particular grid node.
For example, in Fig.~\ref{fig:primary_add}a, robots 6 through 11 are all said to be upstream of robot 5's position, and robots 0 through 4 are said to be downstream of robot 5.
We will also use \textbf{existing shape} and \textbf{existing path} to refer to the shape and path that existed prior to the change and \textbf{new shape} and \textbf{new path} to refer to the post-change shape and post-change path, respectively.
The term \textbf{interim path} will be used to refer to any provisional path formed in the process of changing from an existing path to a new path.

\subsection{Change Detection}
The first step in shape adaptability, detecting the change, is largely outside of the scope of work in this paper because the exact process by which a swarm detects an external environmental stimulus is often dependent on robot hardware. 
To keep this work hardware agnostic, detection is simplified by using human gestures as environmental stimuli and assuming that robots can sense two unique gestures: one indicating a box addition and one indicating a box subtraction.
Further, we assume that gesture identification is local (only robots nearest to the gesture can sense it), and that a gesture occurs at the location of the desired change.
Once a gesture is sensed, robots can propagate notification of the change across the swarm using robot-to-robot communication.
It is assumed that a human can initiate a change to any portion of the existing shape that results in a new valid shape.
This includes creating and removing holes but excludes changing the entry and exit nodes.
It is also assumed that a new change is only initiated after an existing change is completely resolved.

\subsection{Primary Changes}
After a change has been detected and communicated, the swarm begins making changes to the path.
In the case of a box addition, this involves robots filling in the newly added box.
In the case of a box subtraction, this involves robots filing out of the newly removed box.
These behaviors are captured in Algorithm~\ref{alg:primary_changes} and described in this section.

\begin{figure*}[!t]
\centering
\includegraphics[width=0.8\textwidth]{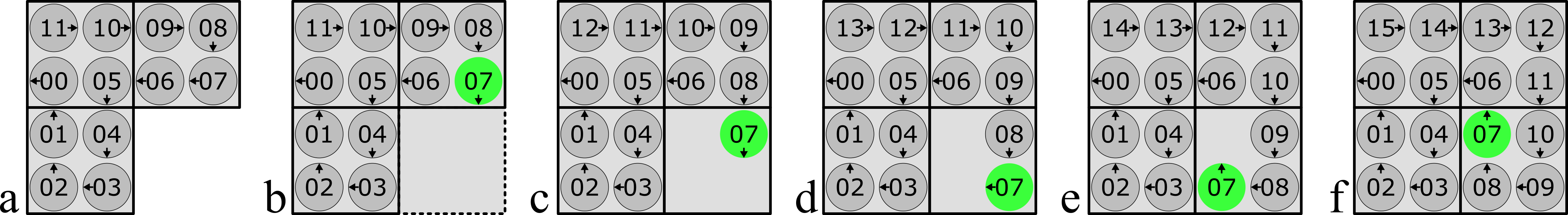}
\caption{Primary changes in response to a box added to the shape in (a). Numbers represent robot IDs. Arrows indicate robot heading. Robot 7 (green) is at the point of inflection in (b) and then is the first robot into the new box.}
\label{fig:primary_add}
\end{figure*}

\begin{figure*}[!t]
\centering
\includegraphics[width=0.8\textwidth]{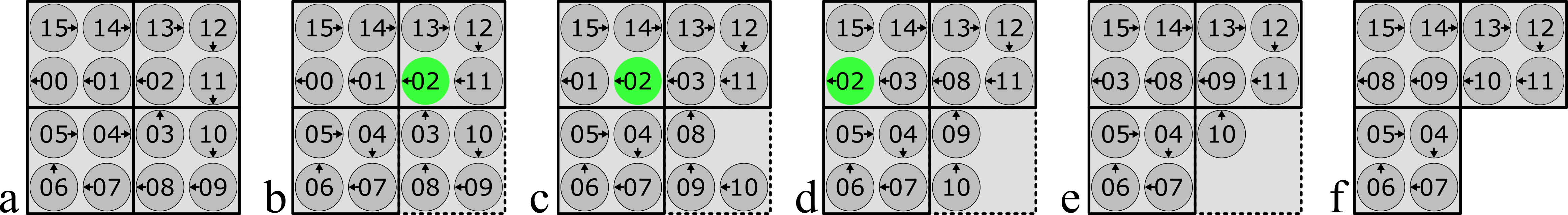}
\caption{Primary changes in response to a box subtracted from the shape in (a). Numbers represent robot IDs. Arrows indicate robot heading. Robot 2 (green) is at the point of inflection in (b) and then leads robots 3, 8, 9, and 10 out of the removed box.}
\label{fig:primary_subtract}
\end{figure*}

In the case of addition, the swarm first identifies the critical point for the change ($n_{cp}$).
This is identified by the robot in the existing shape that has visited only one grid node adjacent to the new box and was planning to take a non-spanning periphery edge as its next move. 
The grid node occupied by that robot becomes the point of inflection for communicating and addressing the change because all robots downstream of that position may be affected by the change and all robots upstream of that position will not be affected by the change (by Lemma~\ref{lem:influence_up}).
Once $n_{cp}$ is identified, each robot takes action in accordance with Algorithm~\ref{alg:primary_changes} lines 1-8 depending on if the robot is upstream, downstream, or at $n_{cp}$.
Once all four nodes in the new box are occupied, the robots have filled the new shape, the interim path is a valid path (Theorem~\ref{claim_add_valid} proven later in Section~\ref{sec:addition_theory}), and primary changes are complete.

For example, consider a three-box shape with a preferred path such as the one drawn in Fig.~\ref{fig:primary_add}a. 
Assume a fourth box is added to the shape to make a square of four boxes (Fig.~\ref{fig:primary_add}b).
Robot 7, shown in green, is occupying the point of inflection since it has only visited one grid node adjacent to the new box, and, prior to the addition, it intended to take a non-spanning periphery edge for its next move (see the heading of robot 7 in Fig.~\ref{fig:primary_add}a).
With robot 7 at the $n_{cp}$, the downstream robots (robots 0 through 6) pause their motion until the new box is filled.
Meanwhile, robots numbered 7 and up continue to move along the upstream path via the default behavior and fill in the new box via clockwise non-spanning edges (Fig.~\ref{fig:primary_add}c-f).
Once the new box is filled, primary changes are complete and secondary changes can begin.

\begin{algorithm}[t]
\caption{Primary Changes}
\label{alg:primary_changes}
    \begin{algorithmic}[1]
        \Require $CT$, $n_{cp}, V_n$ \Comment{$CT$ = change type}
        \If{$CT$ is addition}
            \If{$n_{cp} \in V_n$} \Comment{downstream case}
                \State pause motion and ignore change
            \ElsIf{robot is at $n_{cp}$} \Comment{$n_{cp}$ case}
                \State fill in new box via clockwise motion
            \Else \Comment{upstream case}
                \State execute default behavior until reaching $n_{cp}$
            \EndIf
        \ElsIf{$CT$ is subtraction}
            \Statex \hspace{1em} $\triangleright$ \textbf{Set Boolean True if robot is currently }
            \Statex \hspace{1em} $\triangleright$ \textbf{within the removed box or False if not}
            \State $inRemovedBox \gets$ checkInRemovedBox()   
            \If{\textbf{not} $inRemovedBox$}
                \If{$n_{cp} \in V_n$} \Comment{downstream case}
                    \State execute default behavior
                \ElsIf{robot is at $n_{cp}$} \Comment{$n_{cp}$ case}
                    \State execute default behavior
                \ElsIf{$n_{cp} \notin V_n$} \Comment{upstream case}
                    \State pause motion
                    \If{robot planned to enter removed box}
                        \State plan clockwise non-spanning motion
                    \EndIf
                \EndIf
            \Else
                \State exit box via clockwise motion to $n_{cp}$
            \EndIf
        \EndIf 
    \end{algorithmic}
\end{algorithm}

The case of box subtraction is similar to addition, but there are some key differences, especially with respect to how the existing path is impacted by the box removal.
A subtraction will immediately impact robots in two categories: 1) robots within the removed box and 2) robots that have passed through the removed box but have not yet visited all of the nodes within the removed box.
This latter category of robots would have to move through the removed box again to exit the shape via the existing path.
Robots that have yet to reach the removed box, and robots that have already visited every node in the removed box, will not be immediately impacted by its removal (by Lemmas~\ref{lem:influence_up} and~\ref{lem:influence_down}, respectively). 
Therefore, as the change is communicated across the swarm, a point of inflection node ($n_{cp}$) is identified like before, but now it represents the earliest node in the existing path after which the subtraction will have no effect.
This allows robots within the removed box to safely follow the robot at the point of inflection out of the removed box without impacting other portions of the path.
The point of inflection is identified as the node occupied by the robot with following criteria:
\begin{enumerate}
    \item The robot is at a node adjacent to the removed box.
    \item The robot has previously visited all four nodes in the removed box.
    \item The robot's previous node was within the removed box.
\end{enumerate}
\noindent 

Once a point of inflection has been identified, each robot takes action in accordance with Algorithm~\ref{alg:primary_changes} lines 10-24 depending on the robot's relative position to the removed box and $n_{cp}$ (upstream, downstream, etc.).
If any of the upstream robots are in a box adjacent to the box to be removed and had planned to move into the box prior to its removal, they adjust their plan to instead move clockwise within their current box.
In other words, they adjust their path from a spanning edge to a non-spanning edge. 
Once all of the robots have exited the box to be removed, primary changes are finished and secondary changes can begin.

For example, consider a shape of four boxes in a 2x2 array with a preferred path cycling robots through the shape (Fig.~\ref{fig:primary_subtract}a).
Assume the swam identifies a gesture indicating the removal of the lower right box (Fig.~\ref{fig:primary_subtract}b).
In such a case, robot 2 would be identified as residing at the point of inflection since it is adjacent to the removed box, it has visited every node in the removed box, and its previous node was within the removed box.
With robot 2 at the point of inflection, robots 0, 1, and 2 can continue to move through the shape as if the removal never occurred.
Robots upstream of 2's position that are not within the removed box (i.e., robots 4-7 and 11-15) remain still as the box is emptied.
Furthermore, robot 4 and robot 11, which had previously planned to take a spanning edges into the removed box, adjusts their paths to take non-spanning edges within their own boxes (Fig.~\ref{fig:primary_subtract}a-b).
Finally, robots 3, 8, 9, and 10 follow robot 2 out of the removed box until the box is empty (Fig.~\ref{fig:primary_subtract}c-f).

Unlike the case of addition, however, the resulting path is not necessarily valid after a subtraction. 
Primary changes for subtraction can only guarantee that the resulting path is at least pseudo-valid (Theorem~\ref{claim_sub_pseudovalid} proved later in Section \ref{sec:subtraction_theory}).
Although we will discuss pseudo-valid paths at length in later sections, we will use the following working definition for now: a \textbf{pseudo-valid path} is a discontinuous path composed of multiple planar Hamiltonian cycles that is created by processes similar to path separation or path merging. 
Examples of a pseudo-valid paths can be found in Fig.~\ref{fig:movement_overview}b and Fig.~\ref{venn}c.


\subsection{Secondary Changes}
Secondary changes are the series of local path changes that convert the swarm from following the post-primary-changes path to the preferred path of the new shape.
As the last step in shape adaptability, secondary changes are extremely important for shape persistence because they guarantee that the swarm is following the preferred path of the new shape.
If the swarm did not return to moving along the preferred path of the new shape after each change, then the impact of each change would have to be communicated to the swarm indefinitely. 
In those cases, the swarm's path through the shape would forever be altered by the shape changes encountered in the past.
By always reverting to the preferred path, the swarm behavior becomes independent of its past, and the swarm can continue to execute the default behavior to persist in the shape while remaining adaptive to future changes.

There are two interchangeable methods for making secondary changes.
The first method, called the \textbf{communication-based method}, resolves secondary changes by passing a message robot-to-robot through the swarm along the new preferred path immediately after primary changes are complete.
Each swarm member reacts to this message and, if necessary, makes a local change to its planned path to match the new preferred path. 
The swarm then executes the default behavior and moves along the new preferred path as soon as the message has traveled through the shape.
The second method, called the \textbf{movement-based method}, resolves secondary changes by promoting a single robot to make a series of local path changes as it moves through the remainder of the new shape.
This results in a sequence of interim paths until the swarm finally converges to the new preferred path.

The communication-based method is best suited for applications where communication speed is much faster than the traveling speeds of the robots and where secondary changes must be completed quickly (within one time period, $\tau$). 
This is because secondary changes are complete as soon as the message reaches the robot at the exit node, and one does not have to wait for a robot to travel through the remainder of the new shape (as is the case with the movement-based method).
By contrast, the movement-based method is better suited for applications where communication speed is slower, and a message is not likely to be passed through the swarm in one time period ($\tau$).
In practice, a swarm would be designed to execute one of the two methods for secondary changes for the duration of its task; the swarm would not alternate between the two methods during any given task.

Both methods begin after primary changes have been addressed, and both start from the same grid node in the new shape.
For clarity, we define the node from which secondary changes begin as the \textbf{secondary change start node (SCSN)}.
It represents the latest node in the path where all upstream robots will not be affected by the change (by Lemma~\ref{lem:influence_up}).
In the case of box addition, the point of inflection identified during primary changes (prior to filling in the new box) and the SCSN are identical.
In the case of box subtraction, the SCSN is the node occupied by the robot with the following criteria:
\begin{enumerate}
    \item The robot is at a node adjacent to the removed box.
    \item The robot has never visited any of the nodes in the removed box.
    \item Prior to removal, the robot's next position would have been in the removed box.
\end{enumerate}
In the subtraction example, the SCSN is occupied by robot 11 in  Fig.~\ref{fig:primary_subtract}.

\subsubsection{Communication-Based Method}
In this method, the robot occupying the SCSN initiates a \textbf{memory message} through the swarm. 
The memory message is essentially a virtual robot executing the default behavior along the new preferred path.
Instead of physically moving, the message is transmitted from robot to robot along the new preferred path in accordance with Algorithm~\ref{alg_mem_msg}.
Each recipient of the memory message re-writes its boxes visited ($V_B$) and nodes visited ($V_n$) to match that of the memory message ($m$). 
Then, the robot appends its own grid node and box information to the data structures before executing the default behavior without actually moving.
By doing this, the robot determines its next position.
The robot occupying the original recipient's next position is the next recipient of the updated memory message ($m'$). 
The original recipient then transmits the updated memory message to the next recipient and the process continues.

\begin{algorithm}[t]
\caption{Memory Message Reception}
\label{alg_mem_msg}
    \begin{algorithmic}[1]
        \Require $m,S,V_B,V_n$ \Comment{$m$ = memory message}
        \Ensure $m'$ \Comment{$m'$ = new memory message}
        \Statex $\triangleright$ \textbf{Update data structures with information from $m$}
        \State $V_B \gets V_B$ from $m$ and $b$ \Comment{$b$ = current box}
        \State $V_n \gets V_n$ from $m$ and $n$ \Comment{$n$ = current node}
        \Statex $\triangleright$ \textbf{Determine next edge via default behavior}
        \State $\epsilon' \gets$ defaultBehavior($b,S,V_B,V_n$) 
        \Statex $\triangleright$\textbf{ Determine next node}
        \State $n' \gets $ node at end of $\epsilon'$\Comment{$n'$ = next node}
        \Statex $\triangleright$ \textbf{Create message to send to robot at next node}
        \State $m' \gets $ createMessage($V_B, V_n, n'$) 
    \end{algorithmic}
\end{algorithm}

This creates local changes in the path as the message traverses through the swarm.
Each recipient adjusts its memory, effectively rewriting its past as if it had always been moving along the new preferred path of the new shape.
This ensures that all future robot movements will also be along the new preferred path of the new shape.
The memory message is only sent downstream from the SCSN because all upstream portions of the path are the same for both the existing path and the new path (by Lemma~\ref{lem:influence_up}). 
Eventually, the memory message reaches the robot at the exit node and the robots resume the default behavior as if nothing ever happened.

Although the communication-based method works for both shape additions and subtractions, we will consider only the example subtraction case (from Fig.~\ref{fig:primary_subtract}) for brevity.
After the box is removed and primary changes are resolved, robot 11 (at the SCSN) begins transmitting a memory message through the swarm.
Fig.~\ref{fig:message_based} shows the state of the swarm when memory message transmission begins (Fig.~\ref{fig:message_based}a) and  ends (Fig.~\ref{fig:message_based}b). 
Note Fig.~\ref{fig:message_based}a is identical to Fig.~\ref{fig:primary_subtract}f because secondary changes begin where primary changes left off.
In accordance with Algorithm~\ref{alg_mem_msg}, the memory message moves through the swarm along the new preferred path, so it passes through robots in the following order: 11, 10, 9, 4, 7, 6, 5, 8.
Robots 11, 10, 4, 7, 6, and 8 do not change their heading direction after processing the memory message because their heading in the interim path (Fig.~\ref{fig:message_based}a) is identical to the new preferred path (Fig.~\ref{fig:message_based}b).
However, robots 9 and 5 both make a local change to their heading in response to the memory message so that the swarm is following the new preferred path (Fig.~\ref{fig:message_based}a to b).

\begin{figure}[t]
\centering
\includegraphics[width=52mm]{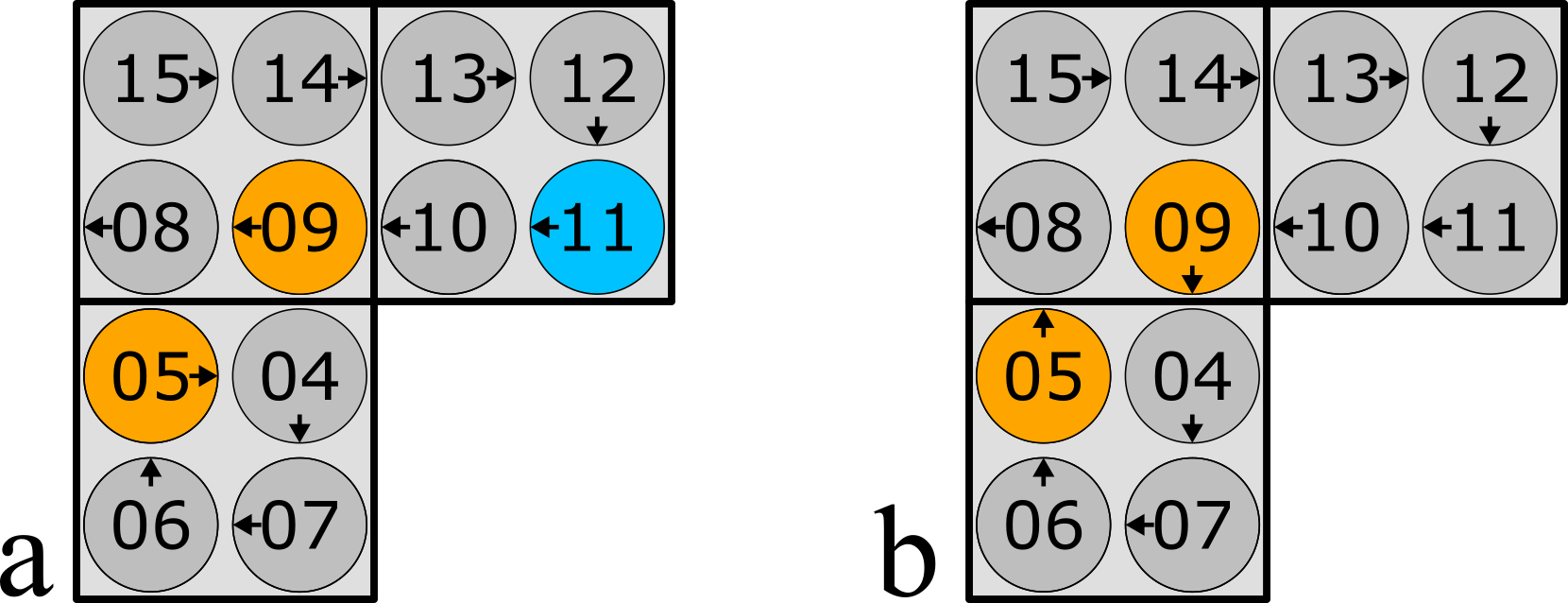}
\caption{Secondary changes. (a) The state of the swarm immediately following the primary changes (Fig.~\ref{fig:primary_subtract}). Robot 11 is at the SCSN (blue). (b) The state of the swarm after the communication-based method is complete with robots 5 and 9 (orange) having changed their heading. Numbers represent unique IDs of the robots. Arrows indicate robot heading.}
\label{fig:message_based}
\end{figure}

\subsubsection{Movement-Based Method}
In this method, secondary changes happen more slowly than in the communication-based method.
This is because they rely on a robot moving through the shape, swapping destinations with other robots in the swarm to make local path changes as it goes.
This results in a series of interim paths from the path immediately after primary changes complete to the new preferred path.
For example, the path after primary changes for the box addition described in Fig.~\ref{fig:primary_add} looks like the path drawn in Fig.~\ref{fig:movement_overview}a. 
This path is changed twice in a sequence of interim paths from Fig.~\ref{fig:movement_overview}a to Fig.~\ref{fig:movement_overview}c in order to achieve the new preferred path (Fig.~\ref{fig:movement_overview}c).

\begin{figure}[t]
\centering
\includegraphics[width=70mm]{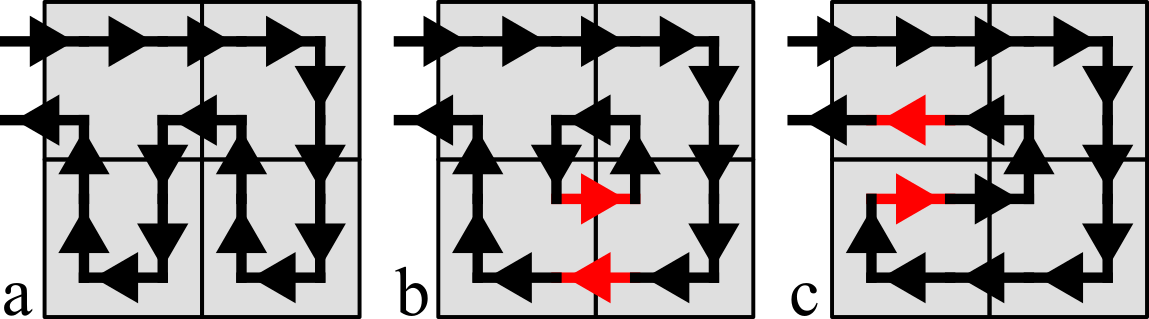}
\caption{A sequence of interim paths leading from an interim path (a) to the new preferred path (c). Changes from the previous path are shown in red. Arrows indicate edge direction.
}
\label{fig:movement_overview}
\end{figure}

To begin converting the swarm's path to the new preferred path, the swarm promotes a \textbf{change robot}.
The change robot is not inherently special; it is just the robot that happens to be present at the SCSN when primary changes are completed.
As before, all portions of the path upstream of the SCSN are the same for both the existing path and the new preferred path (by Lemma~\ref{lem:influence_up}), so all impacts of the change robot will come after the change robot's initial position. 
Thus, the change robot represents the first robot capable of traversing the preferred path of the new shape using only the default behavior.

Even though the change robot is executing the default behavior, robots downstream of its position may not be.
Since the post-primary-changes interim path is not necessarily the preferred path (it could be valid or pseudo-valid as proven later in Proofs~\ref{claim_add_valid}~and~\ref{claim_sub_pseudovalid}, respectively), robots downstream of the change robot's position may be executing non-default behavior to maintain a non-preferred interim path.
Non-default behavior is facilitated by \textbf{pass-back robots}: robots that use neighbor-to-neighbor communication to ``pass back" a non-default movement such that upstream robots follow the pass-back robot's non-default path. 

Robots involved in or adjacent to a primary change (i.e., those that fill a new box and those that are adjacent to a newly added or removed box) immediately become pass-back robots when primary changes are finished.
Then, after they move to their next grid node, each pass-back robot transmits a \textbf{pass-back message} (Algorithm~\ref{alg_passback}) to the robot in its previous grid node ($n_{prev}$).
The recipient of the pass-back message then adjusts its path to follow the pass-back robot (Algorithm~\ref{alg_passback_receive}).
The recipient also rewrites its memory ($V_B$ and $V_n$) to match that of the sender (the original pass-back robot) and then becomes a pass-back robot itself. 
This ensures that the grid node becomes a semi-permanent instance of non-default behavior because the recipient will pass-back the same message to the next robot in line after it moves.
Any upstream robot that receives a pass-back message will follow the robot that previously occupied the grid node and then tell the next upstream robot to do the same.

After transmitting a pass-back message, a pass-back robot converts back to a regular robot and follows the default behavior.
However, if it ever receives another pass-back message from a downstream robot, it will become a pass-back robot again (per Algorithm~\ref{alg_passback_receive}).

Pass-back behavior is only cleared from the swarm by the change robot. 
It does this by moving through downstream grid nodes and ignoring pass-back messages as it moves (i.e., the change robot does not continue to transmit pass-back messages even if a pass-back robot attempts to send a message to the change robot).

\begin{algorithm}[t]
\caption{Pass-back Robot Behavior: Transmission}
\label{alg_passback}
    \begin{algorithmic}[1]
        \Require $\epsilon', V_n, V_B$
        \Ensure $m_{pb}$ \Comment{$m_{pb}$ = pass-back message}
        \State move to next node via $\epsilon'$
        \State $n_{prev} \gets V_n[-1]$ \Comment{get previous node}
        \State append $b$ to $V_B$ \Comment{update boxes visited w. current box}
        \State append $n$ to $V_n$ \Comment{update nodes visited w. current node}
        \Statex $\triangleright$ \textbf{Create message to send to previous node}
        \State $m_{pb} \gets $ createMessage($V_B$, $V_n$, $\epsilon'$, $n_{prev}$)
        
    \end{algorithmic}
\end{algorithm}

\begin{algorithm}[t]
\caption{Pass-back Robot Behavior: Reception}
\label{alg_passback_receive}
    \begin{algorithmic}[1]
        \Require $m_{pb}$ \Comment{$m_{pb}$ = pass-back message}
        \State $V_B \gets V_B$ from $m_{pb}$\Comment{overwrite boxes visited}
        \State $V_n \gets V_n$ from $m_{pb}$\Comment{overwrite nodes visited}
        \State $\epsilon' \gets \epsilon'$ from $m_{pb}$ \Comment{overwrite edge to next node}
        \State become pass-back robot         
    \end{algorithmic}
\end{algorithm}

Since its downstream neighbors may not be navigating along the same path as the change robot, conflicts can occur when both the change robot and a neighboring robot intend to move to the same node.
Such conflicts are resolved via a process called \textbf{destination swapping}.
In a destination swap, the change robot effectively has a ``choice" to follow the default behavior along an edge to a node (destination 1) or to follow the current interim path along an edge to some other node (destination 2).
Since the change robot is always executing its default behavior, it will always choose destination 1.
This leaves destination 2 available for the conflicting neighbor robot that had previously planned to move to destination 1 but can no longer since the change robot has priority.
Thus, the change robot effectively ``swaps destinations" with the conflicting neighbor robot, giving the conflicting neighbor robot destination 2 in exchange for destination 1.
In practice, destination swapping is facilitated by the change robot constantly broadcasting its planned movements so that conflicting neighboring robots can react to the message and change their destination before collisions occur.

After destination swapping, the robot that changed its destination to accommodate the change robot becomes a pass-back robot so that other robots continue to follow this new direction and a new interim path is created.
This process continues with each destination swap creating a sequence of interim paths until the new preferred path is achieved (Theorem~\ref{claim_swap} proved later in Section~\ref{sec:secondary_changes_proof}).
Once the change robot reaches the end of the shape, all pass-back nodes have been cleared, the swarm is executing the new preferred path, and all future robots can continue to execute their default behavior without exception.
Finally, the change robot reverts back to a normal swarm robot as it departs for the charging station.

Similar to the communication-based method, the movement-based method transforms interim paths to preferred paths via an identical process for both addition and subtraction.
For brevity, only the addition example from Fig.~\ref{fig:primary_add} will be discussed here.
After addition, the swarm has formed the interim path shown in Fig.~\ref{fig:primary_add}f.
For clarity, a graphic showing the sequence of events for secondary changes is provided in Fig.~\ref{fig:alg3_intro}.
As before, secondary changes begin where primary changes left off, so Fig.~\ref{fig:primary_add}f matches Fig.~\ref{fig:alg3_intro}a. 

\begin{figure}[t]
\centering
\includegraphics[width=0.42\textwidth]{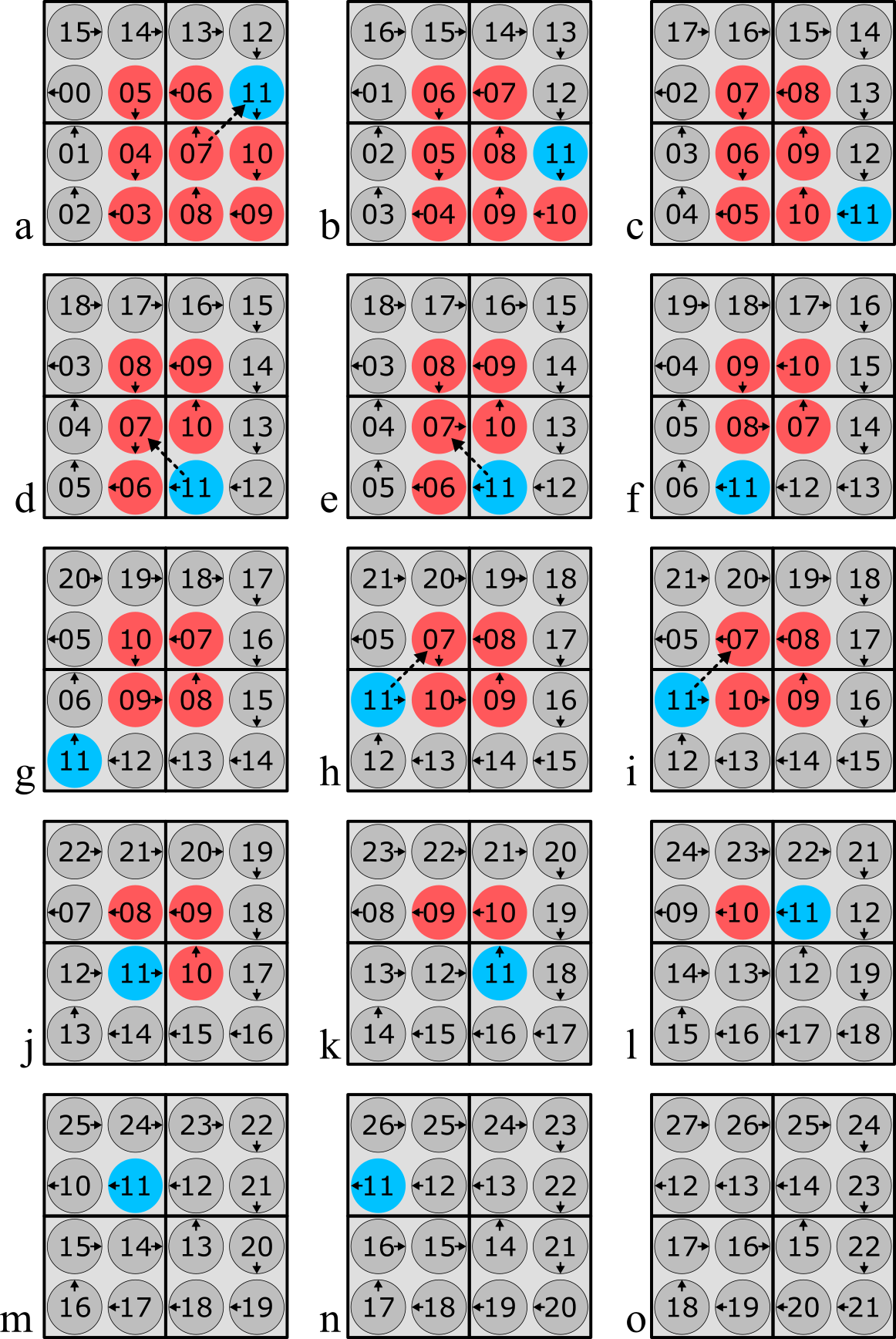}
\caption{Secondary changes via the movement-based method. The sequence transforms the interim path (a) to the preferred path (o). Numbers represent robot IDs. Arrows on the robots indicate headings. Dashed arrows indicate communication paths for direction swapping in (d), (e), (h), and (i) and promoting the change robot (a). Red robots are pass-back robots and the blue robot is the change robot.}
\label{fig:alg3_intro}
\end{figure}

An example of pass-back behavior can be seen in the transition from Fig.~\ref{fig:alg3_intro}a to~\ref{fig:alg3_intro}b.
In Fig.~\ref{fig:alg3_intro}a, robot 3 is a pass-back robot since it was adjacent to the box addition.
When it moves to its next position in Fig.~\ref{fig:alg3_intro}b, robot 3 sends a pass-back message to robot 4 and transitions back to the default behavior.
Robot 4 receives the pass-back message and adjusts its path to follow robot 3.
In Fig.~\ref{fig:alg3_intro}a, robot 5 is also a pass-back robot.
When it moves to its next position (in Fig.~\ref{fig:alg3_intro}b), it sends a pass-back message to robot 6.
Robot 5 transitions to the default behavior, but then immediately receives a pass-back message from robot 4, transitioning it back to the pass-back state.
All robots involved in the primary change (or adjacent to the primary change) become pass-back robots.
They are shown in red in Fig.~\ref{fig:alg3_intro}.

Change robot behavior is also evident in Fig.~\ref{fig:alg3_intro}.
The change robot (robot 11) is promoted by robot 7 once the local changes are complete (Fig.~\ref{fig:alg3_intro}a).
Robot 11 serves as the change robot in this case because it is the robot present at the SCSN when primary changes are completed.
Likewise, robot 7 is the robot that promotes robot 11 because robot 7 was at the point of inflection at the onset of primary changes and was the robot that led the way into the newly added box.
Robot 11 then proceeds to move through the shape, executing its default behavior and destination swapping as necessary.
For example, in Fig.~\ref{fig:alg3_intro}d, robot 11 has a ``choice" to follow the interim path ``up" to robot 10's position or to follow its default behavior ``left" to robot 6's position.
Since robot 11 always executes its default behavior, it will choose to move ``left" to the node occupied by robot 6 (see robot 11's heading in Fig.~\ref{fig:alg3_intro}d). 
Thus, robot 7, which had previously planned to move to robot 6's position, is free to swap destinations and instead move ``right" to robot 10's position (Fig.~\ref{fig:alg3_intro}e). 
Robot 7 also switches to a pass-back state as a result of the destination swap. 
This particular destination swap results in a pseudo-valid path of two sub-cycles: robots 7, 8, 9, and 10 form one sub-cycle while robots 3-6 and 11-18 form another.
This pseudo-valid path continues to persist until Fig.~\ref{fig:alg3_intro}h~and Fig.~\ref{fig:alg3_intro}i when robots 7 and 11 destination swap again.
Lastly, the change robot (robot 11) clears pass-back messages as it moves, so when robot 11 has exited the shape, the swarm is following the new preferred path (Fig.~\ref{fig:alg3_intro}o).

\section{Additional Experiments and Demonstrations}\label{sec:additional_experiments}
We performed additional experiments to demonstrate the swarm's adaptability while maintaining its persistence.
Specifically, we show detection, primary changes, and secondary changes via both the communication-based method and the movement-based method with the swarm executing the default behavior while not resolving a change.
These demonstrations were completed on a swarm of mobile robots to capture the swarm's response to an actual human and on a simulated swarm of 90 robots to depict the scalability of the algorithms.
With no known direct state-of-the-art comparisons, it was not possible to compare the performance of these demonstrations against other methods.
Instead, these demonstrations were completed to both reinforce the effectiveness of the algorithms and complement the theoretical proofs.

\subsection{Demonstrations with Humans}
Experiments were performed on the swarm of \emph{Coachbots} to show the swarm's response to human initiated changes.
These experiments used a 20 robot initial shape and a 22 robot charging station.
The demonstrations on the \emph{Coachbots} also consisted of both addition and subtraction via both secondary change methods (communication-based and movement-based).

In these experiments, once the initial shape was formed and the robots reached a steady-state persistent cycle into and out of the shape, a human initiated a change to the shape by interacting directly with the swarm. 
In both secondary change methods, and for both addition and subtraction, the swarm effectively detected each change, made primary changes, and finally made secondary changes to continue to cycle robots through the new shape.
The time to resolve each change was on the order of 1-7 minutes depending on the location of the change.
These experiments were recorded with an overhead camera, and images of one human-swarm interaction experiment are captured in Fig.~\ref{human_adapt}.

\begin{figure}[!t]
\centering
\includegraphics[width=75mm]{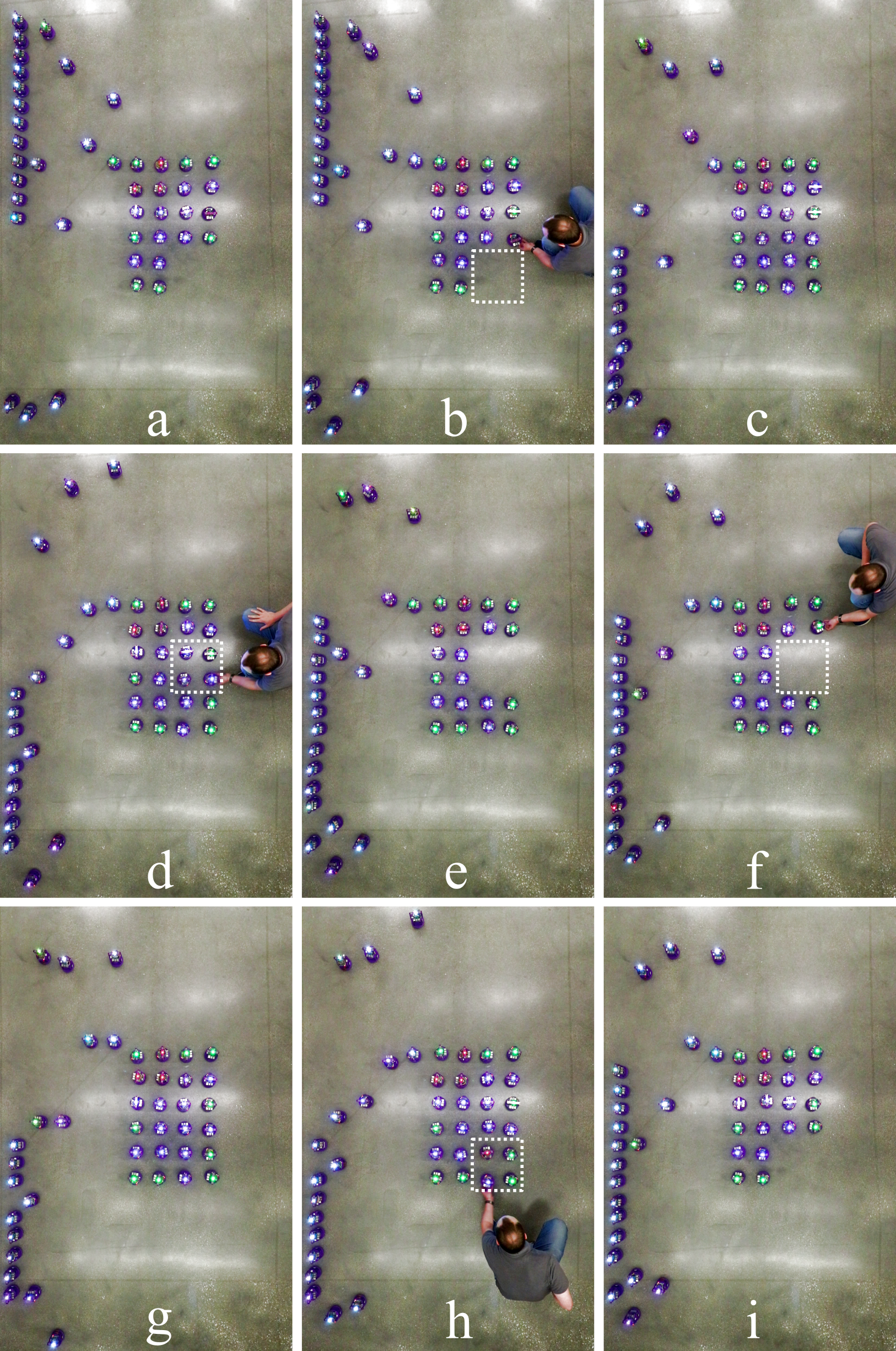}
\caption{A sequence of images from a \emph{Coachbot} experiment running the movement-based method. Pane (a) shows the initial shape. Human initiated additions are in panes (b) and (f), and subtractions are in panes (d) and (h). A dashed rectangle indicates the added or removed box. Panes (c), (e), (g), and (i) show the swarm's shape once the previous change is resolved. In-shape robots have green, blue, and red LEDs to indicate the potential for addition, subtraction, and no change, respectively.
}
\label{human_adapt}
\end{figure}

The \emph{Coachbots} were used in these experiments because they were an existing swarm available to the authors.
Unfortunately, they have no means of sensing humans: only their position $(x,y)$ and heading $(\theta)$.
Therefore, we physically rotated robots in place as human ``gestures." 
Robots changed color within the shape to indicate which robots could be manipulated for an addition and which for a subtraction.
Robots would turn green if they were along the periphery of the shape alongside a box that could be added to the shape.
Robots would turn blue if they were not along the periphery of the shape or otherwise not alongside a box that could be added to the shape.
Finally, robots would turn red if they could not be manipulated for either addition or subtraction (e.g., a robot at the exit node).
If a green robot's heading was changed, then the robot would initiate a box addition, and if a blue robot's heading was changed, then the robot would initiate a box subtraction.
Red robots were unresponsive to human interaction.

\subsection{Large Scale Simulations}
The final set of experiments demonstrated the scalability of the algorithms by simulating a swarm of 90 robots forming a 17-box ``N" shape persistently.
The swarm was then manipulated into a 15-box ``U" shape through a series of additions and subtractions.
Human gestures indicating additions or subtractions were emulated by pre-programmed messages transmitted at pre-assigned times to the robots nearest to the desired addition or subtraction.
Images of a particular test are in Fig.~\ref{NU}.
This simulation demonstrates the scalability of the algorithms to swarms of large numbers and shows the complexity of the shapes that can be formed despite the valid shape constraints.
Although scalability is not formally proven, the size of the swarm is not a necessary component in any of the algorithms, and each robot is executing identical behavior without a dependency on a centralized actor.
This indicates that the size of the swarm would not be limited by the algorithms, but rather the capacity of the charging station, the size of the path that an individual robot can travel, the onboard memory of each robot, or the size of the environment.

\begin{figure}[!t]
\centering
\includegraphics[width=0.48\textwidth]{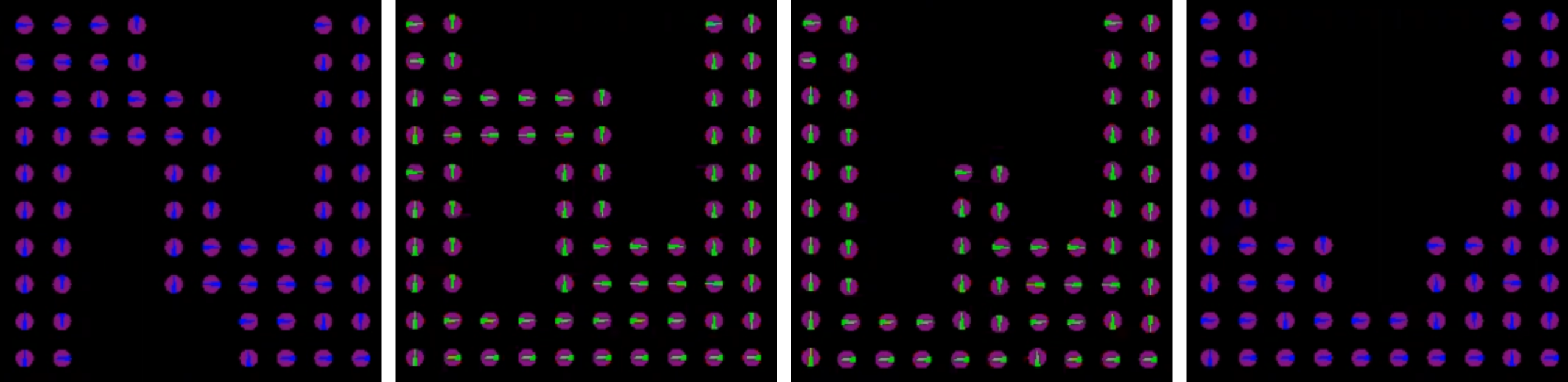}
\caption{A simulation of 90 robots running the communication-based method.  Robots begin in a 17-box ``N" (far left). Boxes are removed and added to the shape (left to right) until the swarm forms the 15-box ``U" (far right). Robot LED color indicates shapes of with an even (green) or odd (blue) number of boxes. The charging station is not shown.
}
\label{NU}
\end{figure}

\section{Shape Adaptability Theory}\label{sec:adaptability_theory}
The previous sections described swarm adaptability in the cases of box addition and box subtraction. 
Specifically, we covered the steps of detection, primary changes (for both addition and subtraction), and secondary changes (for both the communication-based method and the movement-based method).
Along the way, we stated a series of unproven claims as fact.
The remainder of this section formally addresses these claims and provides proofs to support their validity.

\subsection{Primary Changes: Addition Yields Valid Paths}\label{sec:addition_theory}
Earlier we claimed that, in the event of an addition, primary changes always result in a valid interim path. 
We can prove this claim directly by the logic of Proof~\ref{claim_random}.

\begin{theorem}\label{claim_add_valid}
    Addition primary changes result in a valid path.
\end{theorem}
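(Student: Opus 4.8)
The plan is to show that the interim path produced by addition primary changes satisfies the definition of a valid path, and the cleanest route is to demonstrate that this interim path is exactly the path one would obtain by applying the box-addition-plus-path-merging construction from Proof~\ref{claim_random}. Recall that a valid path is a planar Hamiltonian cycle (with adjacent entry/exit nodes) in which every non-spanning edge is directed clockwise within its box and every spanning edge belongs to a parallel-and-opposite pair across a single box connection. Since Theorem~\ref{claim_random} already establishes that adding a box to a valid shape and merging its unit path with an existing periphery edge yields a valid path, it suffices to argue that the swarm's physical behavior during primary changes reproduces precisely this merge operation.

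First I would fix the setup: the existing shape carries a preferred path, which by Proof~\ref{claim_default} is a valid path, so every periphery node is joined by at least one periphery edge (property~3 of Proof~\ref{claim_random}). When a box is added, the critical point $n_{cp}$ is, by its definition, the node from which the existing path was about to take a non-spanning periphery edge into the side adjacent to the new box. I would then observe that by Lemma~\ref{lem:influence_up} nothing upstream of $n_{cp}$ is altered, and by Lemma~\ref{lem:influence_down} nothing downstream (already fully visited) is altered either; only the pair of edges at the box connection is affected. The key step is to show that the robots filling the new box via clockwise non-spanning edges (Algorithm~\ref{alg:primary_changes}, lines 5--6), together with the robot at $n_{cp}$ spanning into the new box, exactly convert the relevant pair of parallel-and-opposite non-spanning periphery edges into a pair of parallel-and-opposite spanning edges. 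That is precisely the path-merging operation of Lemma~\ref{lem:merging} applied to the existing valid path and the clockwise unit path of the new box.

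Having identified the interim path with a single path merge, I would invoke Lemma~\ref{lem:merging} to conclude the result is a planar Hamiltonian cycle, and then check the two structural properties directly: the new box's three internal edges are clockwise non-spanning (since they come from a clockwise unit path), all pre-existing non-spanning edges are unchanged and hence still clockwise, the newly created spanning edges form a parallel-and-opposite pair across the one box connection, and all other spanning edges are untouched. The entry/exit nodes are unaffected and remain adjacent on the periphery. Together these establish that the interim path meets every clause in the definition of a valid path.

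The main obstacle I anticipate is the bookkeeping that verifies the physical primary-change procedure really is a single clean path merge rather than something more complicated. In particular I would need to rule out the possibility that the new box shares more than one side with the existing shape (which would suggest multiple simultaneous merges) and confirm that the valid-shape constraint plus the choice of $n_{cp}$ forces exactly one periphery edge in the parent box to be consumed by the merge. I expect this reduces to arguing that the robot's intended non-spanning move at $n_{cp}$ is always available as a mergeable periphery edge precisely because property~3 holds for the pre-change valid path, so the construction of Proof~\ref{claim_random} applies verbatim and the conclusion follows immediately.
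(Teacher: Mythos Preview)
Your proposal is correct and follows essentially the same approach as the paper: identify the addition primary changes with the path-merging construction of Proof~\ref{claim_random} (the paper phrases this as replacing a periphery edge by the same ``5-edge loop'' that appears when a unit path is merged in) and conclude validity by the logic already established there. Your write-up is more explicit in checking each clause of the valid-path definition, whereas the paper simply invokes Proof~\ref{claim_random} directly; note also that the multi-side concern you flag need not be \emph{ruled out}---the new box may well share several sides with the existing shape---but the definition of $n_{cp}$ guarantees only one periphery edge is consumed, which is the verification you already anticipated.
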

\begin{proof}[\proofname\ 3]\label{proof:add_is_valid}\noindent
Prior to an addition, the existing shape is a valid shape and the existing path is a preferred path.
In the event of an addition, primary changes effectively replace a periphery edge of the existing path with a 5-edge loop: 1 edge that spans into the new box from an existing box, 3 non-spanning edges that move robots clockwise through the new box, and 1 spanning edge back into the existing box.
This is the same 5-edge loop discussed in Proof~\ref{claim_random} that occurs when a unit path through a new box is merged into the existing path.
Thus, by the same logic as Proof~\ref{claim_random}, we know that the 5-edge loop created by primary changes in response to an addition will also result a valid path and Theorem~\ref{claim_add_valid} is valid.
\end{proof}

\subsection{Primary Changes: Subtraction Yields Pseudo-valid Paths}\label{sec:subtraction_theory}
Earlier we claimed that local changes in response to a subtraction can only guarantee that the resulting path is at least pseudo-valid.
In order to prove this, we must first define what a pseudo-valid path is and explain the phrase ``\emph{at least pseudo-valid}."
We will also establish two more lemmas.

\begin{lemma}\label{lem:cw}
    For a path that is pseudo-valid, valid, or preferred, each non-spanning edge results in clockwise motion around the center of its box.
\end{lemma}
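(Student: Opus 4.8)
The plan is to handle the three path types in increasing order of difficulty. For a \textbf{valid} path the statement is immediate: the definition of a valid path explicitly requires each non-spanning edge to be directed clockwise around the center of its box, so there is nothing to prove. For a \textbf{preferred} path I would invoke the closing remark of Proof~\ref{claim_default}, where preferred paths are shown to be a subset of valid paths sharing the same two structural properties; clockwise non-spanning edges is one of them, so the claim transfers for free. This leaves the \textbf{pseudo-valid} case as the only substantive one.

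For pseudo-valid paths my approach is to argue by induction on the number of path-merging and path-separation operations used to generate the path from valid (or preferred) paths, which are its starting points by definition. The inductive hypothesis is the invariant that every non-spanning edge of the current path is clockwise. I would verify that each of the two generating operations preserves this invariant. Path merging (Lemma~\ref{lem:merging}) only ever converts a pair of non-spanning edges into spanning edges and leaves every other edge untouched; since it removes non-spanning edges rather than creating them, any surviving non-spanning edge is unchanged and hence still clockwise. So merging is the easy half.

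The main obstacle is the separation step (Lemma~\ref{lem:separation}), which is the one operation that introduces new non-spanning edges, so I must show those are clockwise. Working in the local frame of a single box connection (Fig.~\ref{fig:operations}), a pair of parallel and opposite spanning edges crosses the connection, and separation closes each of the two incident boxes using the single box edge facing the connection. The key observation is that separation is the exact inverse of merging: merging two clockwise unit paths produces a spanning pair with a fixed orientation determined by the clockwise convention, and undoing it restores precisely the two facing edges traversed clockwise. The delicate point is ruling out the opposite spanning-pair orientation, which would instead close the boxes counter-clockwise; I would exclude it by strengthening the induction invariant to also record that every spanning pair is oriented consistently with the clockwise circulation of both incident boxes, and checking that merging only ever creates such pairs (since it acts on clockwise non-spanning edges) while separation only ever consumes them. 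With this strengthened invariant, separation necessarily reproduces clockwise non-spanning edges, closing the induction and establishing the lemma for all three path types.
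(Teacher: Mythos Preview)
Your proposal is correct and follows essentially the same route as the paper: the paper dispatches the valid and preferred cases exactly as you do (by definition and by the subset relation established at the end of Proof~\ref{claim_default}), and for the pseudo-valid case it simply observes that pseudo-valid paths are obtained from valid paths via path separation and the merging-like operation, so the clockwise non-spanning property is inherited. Your explicit induction on the number of operations, together with the strengthened invariant recording the orientation of spanning pairs, is a more careful elaboration of that same inheritance argument---the paper asserts the preservation in one sentence without isolating the separation step as the delicate case, whereas you do.
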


\begin{lemma}\label{lem:pairs}
    For a path that is pseudo-valid, valid, or preferred, each spanning edge belongs to a pair of parallel and opposite spanning edges that cross over the same box connection.
\end{lemma}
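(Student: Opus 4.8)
The plan is to prove Lemma~\ref{lem:pairs} (together with its companion Lemma~\ref{lem:cw}) by a single induction over the operations used to construct each kind of path. Two of the three cases are immediate: Section~\ref{sec:valid_shapes} \emph{defines} a valid path to be a planar Hamiltonian cycle in which every spanning edge is part of a pair of parallel and opposite spanning edges across the same box connection, and Proof~\ref{claim_default} showed that preferred paths form a subset of valid paths carrying exactly this property. So the only substantive case is that of a pseudo-valid path, which by definition is obtained from a valid or preferred path by a sequence of operations ``similar to path separation or path merging.''

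My first step would be to reduce each generating operation to the canonical local swap of Fig.~\ref{fig:operations}: a merge deletes a pair of parallel and opposite non-spanning edges (one on each side of a box connection) and inserts in their place a pair of parallel and opposite spanning edges across that connection, and a separation performs the reverse. The crucial bookkeeping observation is that spanning edges are therefore only ever created or destroyed \emph{two at a time}, always as one complete parallel-and-opposite pair lying across a single box connection, while every other edge of the path is left untouched. With this in hand the induction is routine: the base case is a valid or preferred path, whose spanning edges already partition into such pairs; in the inductive step a merge appends one new pair (which satisfies the property by construction) and disturbs no existing spanning edge, while a separation deletes one entire pair and leaves every surviving spanning edge in the pair it already belonged to. Because Lemmas~\ref{lem:merging} and~\ref{lem:separation} certify that each intermediate object is still a collection of planar Hamiltonian cycles, the partition of the spanning edges into parallel-and-opposite pairs is an invariant of the whole construction, which is exactly Lemma~\ref{lem:pairs}.

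I expect the main obstacle to be the reduction in the first step rather than the induction itself, because a pseudo-valid path is defined through processes only \emph{similar} to merging and separation --- for instance the destination swaps of the movement-based method, which split a cycle into two sub-cycles. I would therefore need to argue carefully that each such process acts on the edge set as exactly one of the two swaps of Fig.~\ref{fig:operations}: that it alters precisely one parallel-and-opposite pair across a single box connection and nothing more. Once that reduction is secured the ``two-at-a-time'' accounting makes the pairing property fall out immediately, in the same spirit as Proof~\ref{claim_random}. I would carry Lemma~\ref{lem:cw} along in the same induction, since the same swap analysis shows that a separation reinserts precisely the two clockwise non-spanning edges needed to close each box's loop --- their orientation being forced by the orientation of the spanning pair being removed --- so the two invariants are most naturally established together.
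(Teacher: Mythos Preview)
Your proposal is correct and follows essentially the same route as the paper, which argues in a single paragraph that pseudo-valid paths inherit the spanning-pair (and clockwise) property from valid paths because the only operations producing them add or remove spanning edges one complete parallel-and-opposite pair at a time. Your worry about the reduction step is unnecessary: the formal definition of pseudo-valid in Section~\ref{sec:subtraction_theory} explicitly names path separation and the merge-like swap of Fig.~\ref{fig:operations} as the only generating operations, so no further reduction from destination swaps is needed to establish this lemma.
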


We define a \textbf{pseudo-valid path} as any discontinuous path created by separating a valid path into sub-cycles where each \textbf{sub-cycle} is a planar Hamiltonian cycle through a portion of the shape.
Valid paths can be broken into pseudo-valid paths via path separation (Fig.~\ref{fig:operations}) by Lemma~\ref{lem:separation}.
Valid paths can also be broken into pseudo-valid paths via a process similar to path merging (replacing a pair of parallel and opposite non-spanning edges in the existing valid path with a pair of parallel and opposite spanning edges).
For example, the valid path in Fig.~\ref{fig:movement_overview}c could be turned into the pseudo-valid path in Fig.~\ref{fig:movement_overview}b by replacing the red pair of non-spanning edges in Fig.~\ref{fig:movement_overview}c with a pair of spanning edges.

Since pseudo-valid paths are constructed by breaking a valid path into sub-cycles, each psudo-valid path maintains some properties of a valid path.
Specifically, each non-spanning edge results in clockwise motion around the center of its box, and each spanning edge belongs to a pair of parallel and opposite spanning edges that cross over the same box connection (though the spanning edges may belong to different sub-cycles).
Since these properties are true of preferred, valid, and pseudo-valid paths, we can conclude Lemma~\ref{lem:cw} and Lemma~\ref{lem:pairs}.
Examples of a preferred path, valid path, and pseudo-valid path are shown in Fig.~\ref{venn} for comparison.

\begin{figure}[!t]
\centering
\includegraphics[width=80mm]{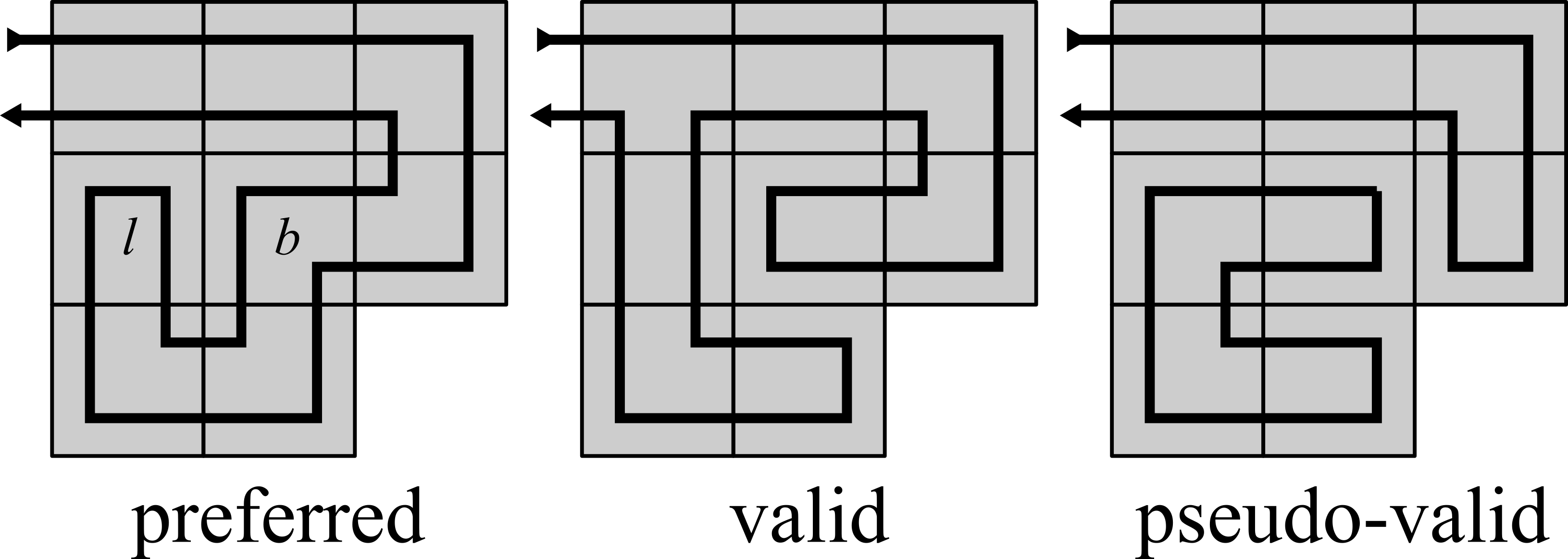}
\caption{Examples of preferred, valid, and pseudo-valid paths through the same shape. Arrows indicate path direction.}
\label{venn}
\end{figure}

Since preferred paths are a subset of valid paths, and valid paths are a subset of pseudo-valid paths (i.e., valid paths are pseudo-valid paths with only one sub-cycle), then we can use the phrase ``\emph{at least a pseudo-valid path}" to refer to any path that is pseudo-valid but could also be valid or preferred. 
With these terms defined, we can now prove our claim directly using the concepts of path separation, path merging, and Lemma~\ref{lem:separation}.

\begin{theorem}\label{claim_sub_pseudovalid}\noindent
    Subtraction primary changes result in at least a pseudo-valid path.
\end{theorem}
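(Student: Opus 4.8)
The plan is to show that the subtraction primary changes amount to nothing more than a sequence of path separations (Lemma~\ref{lem:separation}) applied to the existing preferred path, followed by the deletion of the removed box's now-isolated unit cycle. Before the change, the existing shape is a valid shape and the existing path is a preferred path, so by the properties established in Proof~\ref{claim_default} every non-spanning edge is directed clockwise around its box and every spanning edge belongs to a parallel and opposite pair crossing a box connection. The removed box is joined to the rest of the path only at the box connections it shares with its parent and with any of its children in the DFCP tree, and across each such connection the preferred path carries exactly one pair of parallel and opposite spanning edges.

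First I would identify the primary-change operations with path separations. The algorithm converts each upstream spanning edge that would have entered the removed box into a clockwise non-spanning edge within the entering robot's own box (Algorithm~\ref{alg:primary_changes}, line~19), while the robots inside the removed box exit it clockwise back to $n_{cp}$ in the adjacent box. I would argue that, for each box connection incident to the removed box, this replaces the parallel and opposite spanning pair with a parallel and opposite non-spanning pair, which is precisely the operation of path separation. Applying Lemma~\ref{lem:separation} once per incident connection therefore yields a collection of planar Hamiltonian sub-cycles: the four nodes of the removed box close into their own unit cycle, and the remainder of the shape decomposes into one sub-cycle per severed connection.

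Deleting the removed box, and hence its isolated unit cycle, leaves a set of sub-cycles that together cover exactly the nodes of the new shape. By the definition of a pseudo-valid path as a discontinuous path obtained by separating a valid path into sub-cycles, this interim path is at least pseudo-valid, with the valid (single sub-cycle) case arising precisely when the removed box is a leaf in the tree. Because path separation only swaps spanning pairs for clockwise non-spanning pairs and leaves all other edges untouched, the orientation and pairing structure carries over, so Lemmas~\ref{lem:cw} and~\ref{lem:pairs} are inherited directly and confirm the interim path has the required properties.

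The main obstacle I anticipate is the bookkeeping that matches the algorithmic behavior to clean path separations when the removed box has children in the tree: here removal must split the main cycle into several sub-cycles, and I must verify that the ``exit clockwise to $n_{cp}$'' rule together with the upstream spanning-to-non-spanning conversion reconstructs exactly the non-spanning pairs that Lemma~\ref{lem:separation} predicts at every incident connection, that no node of the new shape is left uncovered, and that no node of the removed box survives in the interim path. Establishing this correspondence rigorously, rather than the orientation preservation (which follows immediately), is where the real work lies.
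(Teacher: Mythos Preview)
Your proposal is correct and takes essentially the same approach as the paper: identify the primary-change operations with path separations at each box connection incident to the removed box, invoke Lemma~\ref{lem:separation} to obtain planar Hamiltonian sub-cycles, and then discard the removed box's isolated unit cycle. The paper organizes the argument as an explicit leaf/branch case split (and in the leaf case goes slightly further, observing that the single surviving sub-cycle is in fact the preferred path of the new shape), whereas you treat all incident connections uniformly and recover the leaf case as the degenerate single-sub-cycle instance; the underlying mechanism and the use of Lemma~\ref{lem:separation} are the same.
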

\begin{proof}[\proofname\ 4]
Prior to a subtraction, the existing shape is a valid shape and the existing path is a preferred path.
Since the existing path is a preferred path, by Proof~\ref{claim_default} we know that a robot traversing the preferred path will create a tree data structure of boxes that matches the tree data structure assembled by the DFCP method (just like in Fig.~\ref{fig:cwp}). 
The box with entry and exit nodes is the root of the tree, and the remainder of the boxes in the shape are either branches or leaves.
Branches are boxes that the robot enters and exits at least twice: at least once to access descendent boxes and once to move back up the tree to exit the shape. 
On the other hand, leaves are boxes for which the robot enters and exits only once, making a distinctive 5-edge ``$\cup$" pattern through the box.
For example, in Fig.~\ref{venn}, the box marked ``\textit{l}" is a leaf and the box marked ``\textit{b}" is a branch.
Now, let $R$ denote a box to be removed.
The impact of removing $R$ from a valid shape (with a preferred path) varies depending on if $R$ is a leaf or a branch. 

If $R$ is a leaf, then the box can be removed without impacting any other portions of the path because the original path did not need to travel through $R$ to get to other boxes. 
When $R$ is removed, the robot that had previously planned to take a spanning edge into $R$ will instead plan to move clockwise within its own box (in accordance with primary change procedures).
In that case, removing $R$ is like separating its unit path from the planar Hamiltonian cycle in the remainder of the shape via path separation (i.e., spanning edges become non-spanning edges).
This results in a planar Hamiltonian cycle through the remainder of the shape (by Lemma~\ref{lem:separation}), and the path prior to and after $R$ is not affected by the presence of $R$ (by Lemmas~\ref{lem:influence_up}~and~\ref{lem:influence_down}).
Thus, we can conclude that the planar Hamiltonian cycle that remains once $R$ is removed is identical to the planar Hamiltonian cycle that would have existed if $R$ had never been a part of the shape.
We can also conclude that the planar Hamiltonian cycle that would have existed if $R$ had never been a part of the shape is the same as the preferred path that a robot would have traveled via the default behavior if $R$ had never been a part of the shape as $R$ (being a leaf) is the last unvisited box in its subtree.
In other words, we can conclude that when a leaf box is removed, primary changes result in the preferred path of the new shape.

However, if $R$ is a branch, then removing the box will result in a discontinuity in the path since the path had to pass through $R$ to get to other boxes. 
In that case, all robots that had previously planned to take a spanning edge into $R$ will instead plan to move clockwise within their own boxes (in accordance with primary change procedures).
This is the same as a path separation operation on each pair of spanning edges that crossed a side of $R$.
By Lemma~\ref{lem:separation}, this still creates planar Hamiltonian cycles in the remainder of the boxes, but instead of being one continuous path, the resulting path will have at least two sub-cycles. 
For example, if $R$ is a branch with only one child, there will be one sub-cycle through all of the boxes visited from the root box through the parent of $R$ and one sub-cycle through all of boxes visited after $R$.
By definition, such a path is a pseudo-valid path, so we can conclude that when a branch is removed, primary changes result in a pseudo-valid interim path.

Finally, since a primary changes in response to a subtraction result in either a preferred path (in the case of a leaf) or a pseudo-valid path (in the case of a branch), we can conclude that Theorem~\ref{claim_sub_pseudovalid} is valid.
\end{proof}

\subsection{Secondary Changes Yield Preferred Paths}\label{sec:secondary_changes_proof}
In describing secondary changes, we presented two interchangeable methods for operation: the communication-based method and the movement-based method.
We also explained that secondary changes return the swarm to the preferred path of the new shape so that robots can freely execute the default behavior without concerning themselves about shape changes that occurred in the past.
Therefore, we must prove that both the communication-based method and the movement-based method result in the new preferred path of the new shape.
The proof of the communication-based method is trivial.
Since the communication-based method is just a virtual implementation of the default behavior, we can conclude that it will always result in the new preferred path of the new shape by Proof~\ref{claim_default}. 

However, the proof of the movement-based method is less obvious.
We earlier claimed that by repeatedly destination swapping with the change robot, the swarm will morph through a sequence of interim paths until the preferred path is achieved.
This relies on two basic tenets: 1) that a destination swap morphs the swarm from one path that is at least pseudo-valid to another path that is at least pseudo-valid, and 2) that the sequence of interim paths will converge to the preferred path.
Both of these are addressed in Proof~\ref{claim_swap}.

\begin{theorem}\label{claim_swap}\noindent
    Destination swapping results in a sequence of valid or pseudo-valid paths until the new preferred path is achieved.
\end{theorem}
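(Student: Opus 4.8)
The plan is to prove the two tenets identified just before the statement: (i) each destination swap carries an at-least-pseudo-valid path to another at-least-pseudo-valid path, and (ii) the sequence of interim paths so generated terminates at the new preferred path. Since secondary changes always start from an interim path that is valid (addition, by Theorem~\ref{claim_add_valid}) or at least pseudo-valid (subtraction, by Theorem~\ref{claim_sub_pseudovalid}), establishing (i) inductively guarantees that every path in the sequence is at least pseudo-valid, while (ii) supplies convergence. Both cases (addition and subtraction) are handled uniformly because, by construction, the movement-based method is identical for the two.

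For tenet (i) I would first recast a destination swap purely in terms of edges. When the change robot at a node encounters a conflict, its default-behavior target (destination~1) and its interim-path target (destination~2), together with the change robot's node and the conflicting neighbor's node, form a 2x2 node configuration -- a \emph{pair} in the sense defined earlier. Before the swap, the change robot and the neighbor contribute one pair of parallel and opposite edges across this configuration; after the swap they contribute the complementary (perpendicular) pair of parallel and opposite edges, and every other edge of the path is untouched. I would then identify this exchange as precisely the operation of path merging or path separation (equivalently, the merging-like operation used earlier to break a valid path into sub-cycles): a parallel-opposite pair is replaced by the complementary parallel-opposite pair across the same box configuration. By Lemma~\ref{lem:merging} and Lemma~\ref{lem:separation}, the affected portion stays a union of planar Hamiltonian cycles while all unaffected portions are unchanged, so the number of sub-cycles changes by exactly one. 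The remaining work is to confirm the two structural invariants survive: since the change robot always follows the default behavior, the non-spanning edge it introduces is clockwise (preserving Lemma~\ref{lem:cw}), and any spanning edge it introduces appears together with its parallel-opposite partner across the shared box connection (preserving Lemma~\ref{lem:pairs}). A path whose sub-cycles cover every node and satisfy these invariants is at least pseudo-valid by definition, closing tenet (i).

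For tenet (ii) the key fact is that the change robot never deviates from the default behavior, so the outgoing edges it commits to as it advances are exactly the edges of the new preferred path (Theorem~\ref{claim_default}). Upstream of the SCSN the existing and new paths already agree (Lemma~\ref{lem:influence_up}), and I would argue by induction on the change robot's position that once the change robot has left a node, that node's outgoing edge is permanently locked to the preferred path: each step either already matches the preferred path (no swap) or triggers a single swap that corrects the local discrepancy and then advances the change robot one node. This makes the change robot's progress a monovariant -- it visits each node of the shape at most once, so only finitely many swaps occur -- and when the change robot reaches the exit node, every node's outgoing edge coincides with the preferred path. Because the change robot also clears pass-back behavior as it moves, no residual non-default edges remain, so the terminal interim path collapses to a single sub-cycle identical to the new preferred path.

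The main obstacle I expect is the geometric case analysis inside tenet (i): verifying that \emph{every} conflict the change robot can encounter is genuinely a swap between two complementary parallel-opposite pairs on a common 2x2 configuration (in particular that the two candidate directions are always perpendicular rather than colinear), and that the edges produced always respect the clockwise and parallel-opposite-pairing invariants, so the result is truly pseudo-valid rather than some other broken configuration. A related subtlety is that the sub-cycle count rises and falls during the process -- a swap may separate the path before a later swap merges it back (as in the worked example) -- so convergence cannot rest on the sub-cycle count alone and must instead be driven by the monotone forward progress of the change robot, with the clearing of pass-back robots guaranteeing that the final path collapses to exactly one cycle.
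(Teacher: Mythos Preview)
Your proposal is correct and follows essentially the same approach as the paper: show that each destination swap preserves at-least-pseudo-validity, then use the change robot's monotone progress along the default behavior (together with pass-back clearing) to force convergence to the new preferred path. The obstacle you flag is exactly what the paper works out---it uses Lemmas~\ref{lem:cw} and~\ref{lem:pairs} to prove the change robot's two candidate edges can never both be non-spanning and never both be spanning, so every swap is one of the two mixed cases, each of which is precisely your merge/separation operation carried out by hand on the four affected edges.
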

\begin{proof}[\proofname\ 5]
First, by Proof~\ref{claim_add_valid} and Proof~\ref{claim_sub_pseudovalid}, it can be said that all local changes, regardless of whether a box was added or removed, will result in at least a pseudo-valid path (i.e., the path is either pseudo-valid, valid, or preferred).
Thus, it can be assumed that when a change robot is promoted at the start of secondary changes, the interim path is at least pseudo-valid.
Second, by definition, a destination swap occurs when the change robot has a ``choice" between an edge in the interim path and an edge in the preferred path.
We can break the possible choices down into three types: a choice between two non-spanning edges, a choice between two spanning edges, or a choice between a spanning and a non-spanning edge.

We can say that the change robot will never choose between two non-spanning edges because for any given node in a valid shape, there is only one outgoing non-spanning edge that results in clockwise motion around the center of its box. 
Since the interim path is at least pseudo-valid, we know that if the interim path edge leaving the change robot's node is non-spanning, then it will result in clockwise motion around the center of its box (by Lemma~\ref{lem:cw}).
Likewise, if the default behavior (preferred path) edge leaving the change robot's node is non-spanning, then it will also result in clockwise motion around the center of its box (by Lemma~\ref{lem:cw}).
Since only one non-spanning edge can produce clockwise motion around the center of its box for each node in a valid shape, the interim path edge and the preferred path edge must be the same.
Thus, we can conclude that a change robot will never choose between two non-spanning edges, so a destination swap will never occur in this case.

We can also say that the change robot will never choose between two spanning edges given Lemma~\ref{lem:cw} and Lemma~\ref{lem:pairs}.
For any given node in a valid shape with at least a pseudo-valid path, there is only one possible spanning edge that a robot can take to move away from that node while maintaining parallel and opposite pairs of spanning edges (Lemma~\ref{lem:pairs}) and clockwise motion for non-spanning edges (Lemma~\ref{lem:cw}).
Fig.~\ref{no_span_span} shows a visual representation of this fact for one node in an arbitrary box in an arbitrary valid shape (the argument is rotationally symmetric for all other nodes in the box).
Of the two possible options for spanning edges leading away from the white node in Fig.~\ref{no_span_span}, only edge $a$ is compatible for paths that are at least pseudo-valid.
If edge $b$ were present, then edge $c$ must also exist by Lemma~\ref{lem:pairs}.
However, the only way a path could connect from the end of edge $b$ to the start of edge $c$ without intersection is for the path to have a non-spanning edge with \emph{counter-clockwise} motion around the center of its box (if not in the box that edge $b$ spans into then in some other box downstream of edge $b$). 
Since this contradicts Lemma~\ref{lem:cw}, we can infer that edge $b$ will never exist in a pseudo-valid, valid, or preferred path.
Thus, the interim path and the preferred path edge will both be edge $a$, and the change robot would not be faced with a ``choice," so a destination swap would never occur in this case.

\begin{figure}[!t]
\centering
\includegraphics[width=38mm]{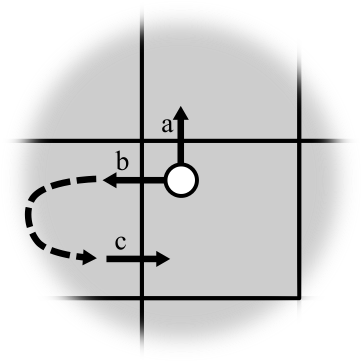}
\caption{Edge $a$ is the only plausible spanning edge to leave the white node in a path that is at least pseudo-valid. Edges $b$ and $c$ cannot exist without a counter-clockwise spanning edge somewhere in the shape (indicated by the dashed line).
}
\label{no_span_span}
\end{figure}

Therefore, to prove that destination swapping will result in a sequence of valid or pseudo-valid paths, one only has to consider two cases.
The first case is the ``spanning to non-spanning" case where the change robot rejects the interim path's spanning edge in favor of a non-spanning edge in accordance with its default behavior.
The second case is the opposite: ``non-spanning to spanning" where the change robot rejects the interim path's non-spanning edge in favor of a spanning edge in accordance with its default behavior.

Consider the first case, drawn generally for any given valid shape in Fig.~\ref{span_non_span}~(left). 
For a change robot positioned at the white node, the interim path edge is the spanning edge (edge $a$) and the default behavior path edge is the non-spanning edge (edge $b$).
Given edge $a$ is in the interim path, then edge $c$ must also exist in the interim path to form a pair of parallel and opposite spanning edges (by Lemma~\ref{lem:pairs}). 
Based on the definitions of valid and pseudo-valid paths, each node is connected by two edges: one incoming and one outgoing. 
Thus, since edge $c$ is already an outgoing edge from the black square node, then edge $d$ must not exist in the interim path.
Therefore, swapping from edge $a$ to edge $b$ for the change robot and edge $c$ to edge $d$ for the robot at the square node is possible without interrupting any other edges in the interim path and ensuring that each node still has one incoming and one outgoing edge.
Furthermore, all aspects of valid and pseudo-valid paths are maintained because there are no intersections, each node is still connected via exactly two edges, the start and end nodes have not been changed, and no periphery edges have been changed (two non-periphery edges are replaced by two non-periphery edges).
The only impact of swapping destinations is that the nodes may belong to a different sub-cycle (or valid path) than they previously belonged to.
Thus, we can conclude that the path remains at least pseudo-valid for the ``spanning to non-spanning" case.

The other case is similar, and it can be seen in Fig.~\ref{span_non_span}~(right).
For a change robot positioned at the white node, the interim path edge is the non-spanning edge (edge $a$) and the default behavior path edge is the spanning edge (edge $b$).
Given edge $b$ is not in the interim path (by definition), then neither is edge $d$ (since spanning edges come in pairs: Lemma~\ref{lem:pairs}).
Since edges $b$ and $d$ do not exist in the interim path, there cannot be any spanning edges across the box connection, and the two upper nodes in Fig.~\ref{span_non_span}~(right) must be connected via a non-spanning edge. 
Ergo, edge $c$ must exist in the interim path by Lemma~\ref{lem:cw}.
So, as before, swapping from edge $a$ to edge $b$ for the change robot and edge $c$ to edge $d$ for the robot at the square node is possible without interrupting any other edges in the interim path and ensuring that each node still has one incoming and one outgoing edge.
This swap maintains all aspects of valid and pseudo-valid paths, so the only impact of the destination swap would be if any nodes have changed from one sub-cycle (or valid path) to another.
Thus, we can conclude that the path remains at least a pseudo-valid path for the ``non-spanning to spanning" case.
And, since this is true for both cases, we can conclude that a sequence of destination swaps will result in a sequence of valid or pseudo-valid paths.

\begin{figure}[t]
\centering
\includegraphics[width=80mm]{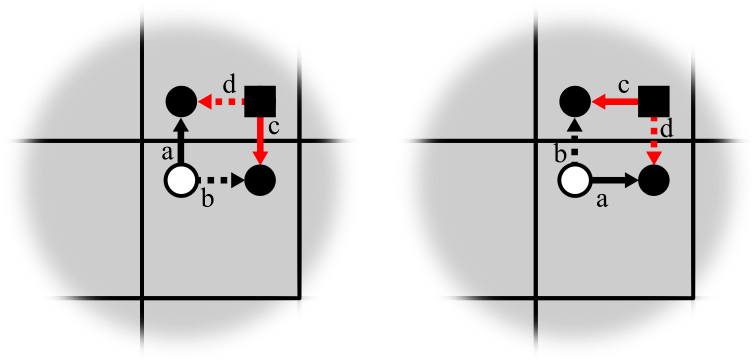}
\caption{The two cases of destination swapping. Change robots are white circles. Black squares are robots that destination swap in response to the change robot. Solid arrows indicate existing interim path edges. Dashed arrows indicate edges post destination swap. (left) Change robot chooses a non-spanning edge instead of a spanning edge. (right) Change robot chooses a spanning edge instead of a non-panning edge.
}
\label{span_non_span}
\end{figure}

Finally, as the change robot moves, it executes its default behavior, and so do all of the robots upstream of its position.
This implies that, as the change robot moves closer to the end of the shape, more robots are following the new preferred path via the default behavior and fewer robots are following the interim path.
Therefore, a sequence of valid and pseudo-valid paths created as a change robot moves through the shape will converge to the new preferred path as the change robot converges toward the exit node, and Theorem~\ref{claim_swap} is valid.
\end{proof}

\section{Conclusion}\label{sec:conclusion}
In this work, we presented algorithms for persistent and adaptive 2D shape formation that allow a swarm to overcome the limitations of individual robot power constraints and the inflexibility of predefined shapes. 
We have shown that these algorithms are provably correct and have demonstrated their effectiveness in both simulation and a swarm of physical robots.
More significantly, though, we have opened the door to a largely unexplored field of swarm shape formation applications where the duration of the task may no longer be a constraint on the swarm's performance and the shape that the swarm is forming during its task is free to change in response to an external stimulus such as a gesture by a human.

Future work includes demonstrating these algorithms on a swarm of physical flying robots. 
There is also work to be done to improve the efficiency and fault tolerance of these algorithms to make them more suitable for practical application.
Finally, we intend to develop three-dimensional algorithms that are analogous to the 2D algorithms presented in this paper so that swarm implementations are not constrained to planar tasks.
One way to accomplish this might be to execute 2D algorithms for a set of vertically stacked layers in a 3D shape. 
Another method might be to develop valid 3D shapes constructed of cubes in the same way we developed valid shapes constructed of boxes in 2D.
Additional challenges with a 3D version will include the impact of downwash on other flying robots, the ability to detect human gestures while in flight, and the safety concerns associated with a human interacting with flying robots.

\printbibliography

@book{albani2017field,
   abstract = {The demands from precision agriculture (PA) for high-quality information at the individual plant level require to rethink the approaches exploited to date for remote sensing as performed by unmanned aerial vehicles (UAVs). A swarm of collaborating UAVs may prove more efficient and economically viable compared to other solutions. To identify the merits and limitations of a swarm intelligence approach to remote sensing, we propose here a decentralised multi-agent system for a field coverage and weed mapping problem, which is efficient, intrinsically robust and scalable to different group sizes. The proposed solution is based on a reinforced random walk with inhibition of return, where the information available from other agents (UAVs) is exploited to bias the individual motion pattern. Experiments are performed to demonstrate the efficiency and scalability of the proposed approach under a variety of experimental conditions, accounting also for limited communication range and different routing protocols.},
   author = {Dario Albani and Daniele Nardi and Vito Trianni},
   keywords = {Multi-Robot Systems,Robotics in Agriculture and Forestry,Swarms},
   title = {Field Coverage and Weed Mapping by UAV Swarms},
   year = {2017},
}

@article{aranda2016distributed,
  title={Distributed formation stabilization using relative position measurements in local coordinates},
  author={Aranda, Miguel and others},
  journal={IEEE Transactions on Automatic Control},
  year={2016},
}

@article{dong2015time-varying,
  title={Time-varying formation control for unmanned aerial vehicles: Theories and applications},
  author={Dong, Xiwang and others},
  journal={IEEE Transactions on Control Systems Technology},
  year={2014},
}

@article{lim2010waypoint,
   author = {Seung-Han Lim and Hyo-Choong Bang},
   journal = {IJASS},
   publisher = {The Korean Society for Aeronautical & Space Sciences},
   title = {Waypoint Planning Algorithm Using Cost Functions for Surveillance},
   year = {2010},
}

@article{liu2018distributedcontrol,
  title={A distributed control approach to formation balancing and maneuvering of multiple multirotor UAVs},
  author={Liu , Yuyi and others},
  journal={IEEE T-RO},
  year={2018},
}

@article{wang2020shape,
   author = {Hanlin Wang and Michael Rubenstein},
   issue = {3},
   journal = {IEEE T-RO},
   keywords = {Distributed robot systems,multirobot systems,swarms},
   month = {6},
   pages = {597-612},
   publisher = {Institute of Electrical and Electronics Engineers Inc.},
   title = {Shape Formation in Homogeneous Swarms Using Local Task Swapping},
   volume = {36},
   year = {2020},
}

@article{wu2021distributed,
  title={Distributed UAV swarm formation and collision avoidance strategies over fixed and switching topologies},
  author={Wu, Jia and others},
  journal={IEEE Transactions on Cybernetics},
  year={2021},
}

@inproceedings{xu2010multi-robotpattern,
  title={A multi-robot pattern formation algorithm based on distributed swarm intelligence},
  author={Xu, Huaxing and Guan, Haibing and Liang, Alei and Yan, Xinan},
  booktitle={IEEE ICCSEA},
  volume={1},
  pages={71--75},
  year={2010},
}

@article{xu2019concurrent,
  title={Concurrent optimal trajectory planning for indoor quadrotor formation switching},
  author={Xu, Yang and others},
  journal={Journal of Intelligent \& Robotic Systems},
  year={2019},
  publisher={Springer}
}

@inproceedings{xue2009formationcontrol,
  title={Formation control numerical simulations of geometric patterns for unmanned autonomous vehicles with swarm dynamical methodologies},
  author={Xue, Zhibin and Zeng, Jianchao},
  booktitle={IEEE International Conference on Measuring Technology and Mechatronics Automation},
  year={2009},
}

@inproceedings{zelenka2014insect-losscomms,
  title={Insect pheromone strategy for the robots coordination—reaction on loss communication},
  author={Zelenka, J{\'a}n and Kasanick{\`y}, Tom{\'a}{\v{s}}},
  booktitle={IEEE CINTI},
  year={2014},
}

@article{cabreira2019survey,
   abstract = {Coverage path planning consists of finding the route which covers every point of a certain area of interest. In recent times, Unmanned Aerial Vehicles (UAVs) have been employed in several application domains involving terrain coverage, such as surveillance, smart farming, photogrammetry, disaster management, civil security, and wildfire tracking, among others. This paper aims to explore and analyze the existing studies in the literature related to the different approaches employed in coverage path planning problems, especially those using UAVs. We address simple geometric flight patterns and more complex grid-based solutions considering full and partial information about the area of interest. The surveyed coverage approaches are classified according to a classical taxonomy, such as no decomposition, exact cellular decomposition, and approximate cellular decomposition. This review also contemplates different shapes of the area of interest, such as rectangular, concave and convex polygons. The performance metrics usually applied to evaluate the success of the coverage missions are also presented.},
   author = {Tauã M. Cabreira and Lisane B. Brisolara and R. Ferreira Paulo},
   issue = {1},
   journal = {Drones},
   keywords = {Approximate cellular decomposition,Coverage path planning,Exact cellular decomposition,Terrain coverage,Unmanned aerial vehicles},
   month = {3},
   pages = {1-38},
   publisher = {MDPI AG},
   title = {Survey on coverage path planning with unmanned aerial vehicles},
   volume = {3},
   year = {2019},
}

@inproceedings{jiao2010research,
  title={Research on the coverage path planning of uavs for polygon areas},
  author={Jiao, Yu-Song and others},
  booktitle={IEEE ICIEA},
  year={2010},
}

@ARTICLE{cabreira2018energy-aware,

  author={Cabreira, Tauã M. and others},

  journal={IEEE RA-L}, 

  title={Energy-Aware Spiral Coverage Path Planning for UAV Photogrammetric Applications}, 

  year={2018},

  }

@article{difranco2016coverage,
  title={Coverage path planning for UAVs photogrammetry with energy and resolution constraints},
  author={Di Franco, Carmelo and Buttazzo, Giorgio},
  journal={Journal of Intelligent \& Robotic Systems},
  year={2016},
  publisher={Springer}
}

@INPROCEEDINGS{umans1997hamiltonian,

  author={Umans, C. and Lenhart, W.},

  booktitle={Proceedings 38th Annual Symposium on Foundations of Computer Science}, 

  title={Hamiltonian cycles in solid grid graphs}, 

  year={1997},

}

@phdthesis{nishat2020reconfiguration,
  title={Reconfiguration of Hamiltonian cycles and paths in grid graphs},
  author={Nishat, Rahnuma Islam},
  year={2020}
}

@inproceedings{wang2021generating,
  title={Generating Goal Configurations for Scalable Shape Formation in Robotic Swarms},
  author={Wang, Hanlin and Rubenstein, Michael},
  booktitle={DARS},
  year={2021},
  organization={Springer}
}

@article{kolling2015human,
  title={Human interaction with robot swarms: A survey},
  author={Kolling, Andreas and others},
  journal={IEEE Transactions on Human-Machine Systems},
  year={2015},
  publisher={IEEE}
}

@inproceedings{giusti2012human,
  title={Human-swarm interaction through distributed cooperative gesture recognition},
  author={Giusti, Alessandro and others},
  booktitle={Proceedings of the seventh annual ACM/IEEE international conference on Human-Robot Interaction},
  year={2012}
}

@inproceedings{nagi2014human,
  title={Human-swarm interaction using spatial gestures},
  author={Nagi, Jawad and others},
  booktitle={IEEE IROS},
  year={2014},
}

@article{Garey1976planar,
author = {Garey, M. R. and Johnson, D. S. and Tarjan, R. Endre},
title = {The Planar Hamiltonian Circuit Problem is NP-Complete},
journal = {SIAM Journal on Computing},
year = {1976},
}

@inproceedings{huang2001optimal,
  title={Optimal line-sweep-based decompositions for coverage algorithms},
  author={Huang, Wesley H},
  booktitle={IEEE ICRA},
  year={2001},
}

@article{torres2016coverage,
  title={Coverage path planning with unmanned aerial vehicles for 3D terrain reconstruction},
  author={Torres, Marina and others},
  journal={Expert Systems with Applications},
  volume={55},
  pages={441--451},
  year={2016},
  publisher={Elsevier}
}

@article{griffith2011programmable,
  title={Programmable assembly with universally foldable strings (moteins)},
  author={Cheung, Kenneth C and others},
  journal={IEEE T-RO},
  year={2011},
}

@inproceedings{sadat2015fractal,
  title={Fractal trajectories for online non-uniform aerial coverage},
  author={Sadat, Seyed Abbas and Wawerla, Jens and Vaughan, Richard},
  booktitle={IEEE ICRA},
  year={2015},
}

@inproceedings{joshi2019modification,
  title={Modification of Hilbert’s space-filling curve to avoid obstacles: a robotic path-planning strategy},
  author={Joshi, Anant A and Bhatt, Maulik C and Sinha, Arpita},
  booktitle={2019 Sixth Indian Control Conference},
  organization={IEEE}
}

@article{curtis2023autonomous,
  title={Autonomous 3D Position Control for a Safe Single Motor Micro Aerial Vehicle},
  author={Curtis, Andrew G and others},
  journal={IEEE RA-L},
  year={2023},
}
\onecolumn
\appendix

\subsection{Defined Terms}\label{app:terms}
    
\begin{tabular}{l c p{.7\textwidth}}
\toprule
Term       &                  & Definition \\
\midrule
\textbf{box}             &$:$& four grid nodes in a 2x2 square \\
\textbf{change robot}    &$:$& the robot whose movement initiates destination swapping in the movement-based method \\
\textbf{communication-} && \\
\quad\textbf{based method}             &$:$& a method for secondary changes that converts an interim path to the new preferred path by passing a memory message \\
\textbf{default behavior}      &$:$& algorithm used for shape persistence; results in a preferred path \\
\textbf{depth-first} && \\
\quad\textbf{clockwise-priority} && \\
\quad\textbf{(DFCP) method} &$:$& a method for assembling valid shapes by assembling boxes in accordance with a traditional depth-first search where ties are resolved with a clockwise priority \\
\textbf{destination swap}             &$:$& process by which a neighboring robot assumes the interim path destination of the change robot instead of its own interim path destination \\
\textbf{DFCP path}     &$:$& a type of valid path generated by the DFCP method\\
\textbf{downstream}     &$:$& refers to a robot that has previously visited a particular grid node\\
\textbf{existing path}     &$:$& the path prior to a change (i.e., addition or subtraction)\\
\textbf{existing shape}     &$:$& the shape prior to a change (i.e., addition or subtraction)\\
\textbf{interim path}     &$:$& a provisional path formed in the process of changing from an existing path to a new path\\
\textbf{memory message}     &$:$& the message used to communicate change in the communication-based method\\
\textbf{movement-based} && \\
\quad\textbf{method}             &$:$& a method for secondary changes that converts an interim path to the new preferred path by a series of destination swaps initiated by the movement of a change robot \\
\textbf{new path}     &$:$& the path after a change is resolved\\
\textbf{new shape}     &$:$& the shape after a change is resolved\\
\textbf{non-spanning edge}     &$:$& an edge that begins and terminates in the same box\\
\textbf{opposite edges}     &$:$& edges that have directions exactly 180\textdegree{} from one another\\
\textbf{pair}     &$:$& (e.g., a \emph{pair} of edges) two edges with start and end nodes in a 2x2 grid configuration\\
\textbf{parallel edges}     &$:$& edges that are equidistant everywhere and do not intersect\\
\textbf{pass-back message}   &$:$& message used to communicate non-default movement during secondary changes\\
\textbf{pass-back robots}   &$:$& robots that send pass-back messages to upstream robots\\
\textbf{path merging}     &$:$& the process of replacing a pair of parallel and opposite non-spanning edges with a pair of parallel and opposite spanning edges\\
\textbf{path separation}     &$:$& the process of replacing a pair of parallel and opposite spanning edges with a pair of parallel and opposite non-spanning edges\\
\textbf{periphery edge}     &$:$& a non-spanning edge between two periphery nodes\\
\textbf{periphery node}     &$:$& an in-shape node along the periphery of the shape\\
\textbf{preferred path}     &$:$& a type of valid path generated by the default behavior\\
\textbf{pseudo-valid path}     &$:$& a discontinuous path created by separating a valid path into sub-cycles\\
\textbf{secondary change} && \\
\quad\textbf{start node (SCSN)}             &$:$& the node from which secondary changes begin \\
\textbf{spanning edge}     &$:$& an edge that begins and terminates in different boxes\\
\textbf{sub-cycle}     &$:$& a planar Hamiltonian cycle through a portion of the shape\\
\textbf{unit path}     &$:$& a clockwise Hamiltonian cycle through a unit shape\\
\textbf{unit shape}     &$:$& a shape constructed of 1 box\\
\textbf{upstream}     &$:$& refers to a robot that has yet to visit a particular grid node\\
\textbf{valid path}     &$:$& a planar Hamiltonian cycle with adjacent entry and exit nodes on the periphery of the shape such that each non-spanning edge is directed clockwise around the center of its box and each spanning edge is part of a pair of parallel and opposite edges spanning across the same box connection\\
\textbf{valid shape}     &$:$& a shape constructed of boxes assembled continuously such that each box meets the full side of another box\\
\bottomrule
\end{tabular}

\end{document}